\documentclass{article}
\usepackage[english]{babel}
\usepackage[letterpaper,top=2cm,bottom=2cm,left=3cm,right=3cm,marginparwidth=1.75cm]{geometry}

\usepackage{amsmath}
\usepackage{graphicx}
\usepackage{xcolor}
\usepackage{amssymb}
\usepackage{amsthm}
\usepackage[colorlinks=true, allcolors=blue]{hyperref}
\usepackage{bbm}

\newcommand{\bR}{\mathbb{R}}
\newcommand{\bP}{\mathbb{P}}
\newcommand{\bE}{\mathbb{E}}

\newcommand{\ip}[2]{\left\langle#1,#2\right\rangle}
\newcommand{\norm}[1]{\left\Vert#1\right\Vert}
\newcommand{\abs}[1]{\left\vert#1\right\vert}
\newcommand{\absip}[2]{\left\vert\left<#1,#2\right>\right\vert}
\newcommand{\beq}{\begin{equation}}
\newcommand{\eeq}{\end{equation}}

\newcommand{\SRG}{\operatorname{SRG}}
\newcommand{\ReLU}{\operatorname{ReLU}}
\newcommand{\sign}{\operatorname{sign}}

\definecolor{dkred}{rgb}{0.33, 0, 0}
\definecolor{peachy}{rgb}{1, 0.67, 0.67}

\theoremstyle{plain}
\newtheorem{theorem}{Theorem}
\newtheorem{thm}[theorem]{Theorem}
\newtheorem{lemma}[theorem]{Lemma}
\newtheorem{lem}[theorem]{Lemma}

\newtheorem{prop}[theorem]{Proposition}

\newtheorem{cor}[theorem]{Corollary}

\theoremstyle{definition}
\newtheorem{definition}[theorem]{Definition}

\newtheorem{rem}[theorem]{Remark}

\title{Towards a mathematical theory of superposition}
\author{Michael I. Ivanitskiy\thanks{Department of Applied Mathematics and Statistics, Colorado School of Mines, Golden, CO, USA}
\and John Jasper\thanks{Department of Mathematics \& Statistics, Air Force Institute of Technology, Wright-Patterson AFB, OH, USA
}
\and
Emily J.\ King\thanks{Corresponding author: \texttt{emily.king@colostate.edu}} \thanks{Department of Mathematics, Colorado State University, Fort Collins, CO, USA} 
\and Dustin G.\ Mixon\thanks{Department of Mathematics, The Ohio State University, Columbus, OH, USA} \thanks{Translational Data Analytics Institute, The Ohio State University, Columbus, OH}}

\date{}

\begin{document}
\maketitle

\begin{abstract}
We develop a mathematical theory of superposition in neural networks using tools from frame theory and compressed sensing. In our model, a sparse binary vector \(x\) of active features is encoded through an overcomplete dictionary \(W\), and feature recovery is performed by applying \(\operatorname{ReLU}(W^\top W x+b)\) with an appropriate bias vector \(b\). We prove several recovery theorems for this model. In the random-support setting, we establish high-probability support recovery for nearly tight, low-coherence dictionaries, with guarantees when the expected sparsity is up to order \(d/\log n\). In the worst-case support setting, we give a sharp and computable criterion for which sparsity levels permit support recovery. We apply this criterion to Gaussian random matrices and equiangular tight frames. For real equiangular tight frames with \(n>d+1\), we determine the exact recovery threshold in terms of the coherence. The proof of this result for real equiangular tight frames relies on a novel characterization---which should be of independent interest to frame theorists---of the distribution of signs in the Gram matrix.
\end{abstract}

\section{Introduction}

How does the human brain store information? 
A naive theory might posit that a single neuron is responsible for responding to a specific complex stimulus, e.g., one for the concept of \textit{grandmother} and another for \textit{Jennifer Aniston}.
This is a highly complex model since every feature requires a different neuron, and one would not expect such a model to efficiently generalize to new but related input.
The concept of a \textit{grandmother cell} originated as a joke \cite{Barwich19,gross2002genealogy}, building off the slightly less absurd idea of a mother cell. 
Meanwhile, the experiments that led to the notion of a \textit{Jennifer Aniston neuron} actually showed that neurons represent multiple concepts at the same time~\cite{quiroga2013brain}, as the same neuron responded to pictures of Jennifer Aniston and Lisa Kudrow, who costarred with Jennifer Aniston in the sitcom \textit{Friends}. 
A more plausible theory is that of \emph{distributed representation} (e.g.,~\cite{thorpe1995local,plate2002dist}), where concepts are represented as combinations of neuron activations.  
Distributed representation has also been observed in artificial neural networks.  
The goal of this paper is to provide some theoretical underpinning---through the lenses of frame theory and compressed sensing---to the observed behavior in artificial neural networks that is analogous to distributed representation in biological neural networks.

\subsection{Superposition in neural networks}

Artificial neural networks---which we will refer to simply as ``neural networks'' in what follows---consist of cascading affine linear and non-linear maps applied to vectors.
To train a neural network, one fixes an architecture (which specifies the form of each affine linear map, along with the intermediate nonlinearities), and then optimizes the constituent affine linear maps to fit a given training set.
Each output coordinate of each affine linear map is known as a \textit{neuron}, and the value that a neuron takes in response to a given input is known as its \textit{activation}.
The vector of all neuron activations resides in a vector space we call \textit{activation space}; a neural network with $d$ neurons has activation space $\mathbb{R}^d$.

Since the neural network was tuned to optimally transform training data for a given task, one might expect the resulting activation space to be semantically meaningful.
This mimics the setting of word embeddings, where words are represented as vectors in $\bR^d$, and the resulting linear geometry captures semantic meaning~\cite{LevyG14,MikolovYZ13,PenningtonSM14}.
Following~\cite{Anthropic22}, we will assume that semantic meaning can be captured by \textit{sparse coding}, i.e., there exists a dictionary of \textit{feature representations} $w_1,\ldots,w_n\in\mathbb{R}^d$ such that every realizable vector $h$ of neuron activations is well approximated by a nonnegative combination of at most $k$ feature representations:
\[
h
\approx \sum_{j\in K}x_jw_j,
\qquad
K\subseteq\{1,\ldots,n\},
\qquad|K|\leq k,
\qquad x_j\geq0.
\]
Notably, in the $n\gg d$ setting in which there are more features than neurons, the pigeonhole principle precludes the naive theory of each feature determining its own neuron; instead, neurons are forced to be \textit{polysemantic} by responding to a \textit{superposition} of features. Recent work has shown that features with interpretable semantic meaning sometimes exhibit more complicated geometry than single linear directions (i.e., $w_i$); however, sparse coding can still be leveraged to find local structure of feature space or higher dimensional irreducible features~\cite{engels2025not,geiger2026worldinside,bhalla2026saes}.

These notions of polysemanticity and superposition have recurred throughout the recent literature~\cite{Anthropic22,AroraLLMR18, Goh16, OlahCSG20,Anthropic22b}, as there is now ample evidence of superposition emerging from machine learning algorithms ``in the wild.'' 
Words with multiple meanings tend to embed as a superposition of \emph{discourse atoms}, which themselves are not embeddings of words~\cite{AroraLLMR18}; for example, the word ``tie'' is a superposition of discourse atoms, the most important of which are close to embeddings of words pertaining to clothing and competition records. 
To help decipher modern large language models (LLMs) like Pythia-70M and Gemma-2-2B, sparse autoencoders were used in~\cite{marks2024sparse} to decompose points in latent space as a superposition of human-interpretable concepts. 
Interestingly, one can exploit this compressibility of neural networks to squeeze even more functionality into a fixed architecture, i.e., training it to handle multiple unrelated tasks~\cite{cheung2019superposition}, thereby achieving an even higher level of polysemanticity. This property of compressibility may be fundamental to explaining why neural networks, both biological and artificial, are so successful at representing and performing computation over large amounts of information.
To better understand these emerging phenomena, a team from Anthropic developed so-called \textit{toy models of superposition} that they tested and analyzed under various assumptions~\cite{Anthropic22}.
Observations from that paper inspired the mathematical model we analyze in this paper.

We summarize the parameters of superposition as
\[
\begin{aligned}
\text{ \# active features }
&\ll \text{ \# neurons }&&\qquad\text{(sparsity)}\\
&\ll \text{ \# features }&&\qquad\text{(polysemanticity)}\\
&\ll \text{ \# patterns of active features }&&\qquad\text{(expressivity),}
\end{aligned}
\]
where $k$ is the maximum number of active features (\textit{sparsity}), $d$ is the number of neurons, $n$ is the number of features, and $\sum_{j\leq k}\binom{n}{j}$ is the number of patterns of active features.

As a simple illustration, suppose we only had two dimensions to represent color. 
(See Figure~\ref{fig:colors}.)
First, we could represent red as $(1,0)$ and blue as $(0,1)$.
Then any purple like $(1,1)$ would correspond to a superposition of the red and blue features.  
However, by restricting each feature vector to be a standard basis element (akin to the naive one-feature-per-neuron model), we limit the representable palette, e.g., no shade of green can be represented. 
As an alternative, one could choose a polysemantic model by assigning red to $(1,0)$, green to $(-1/2, \sqrt{3}/2)$, and blue to $(-1/2, -\sqrt{3}/2)$. 
Notably, this choice involves more features (colors, $n=3$) than neurons (dimensions, $d=2$). 
Now consider an arbitrary color represented as $(r,g,b)$, where $0 \leq r,g,b\leq 1$ give the amount of red, green, and blue in the color, respectively. 
(N.B., smaller numbers are darker.)
Such a color is represented in this model by the $2$-dimensional vector
\[
r \begin{pmatrix}1\\0\end{pmatrix}
+ g 
\begin{pmatrix}-1/2\\\sqrt{3}/2\end{pmatrix}
+ b 
\begin{pmatrix}-1/2\\-\sqrt{3}/2\end{pmatrix}.
\]
While every color is represented as a linear combination of the three feature vectors, certain colors are represented by the same vectors in $\bR^2$. 
For example, a dark red color with $(r,g,b)=(1/3,\,0,\,0)$ has the same representation as the medium salmon color with $(r,g,b)=(1,\,2/3,\,2/3)$:
\[
\underbrace{1/3\begin{pmatrix}1\\0\end{pmatrix} + 0\begin{pmatrix}-1/2\\\sqrt{3}/2\end{pmatrix} + 0\begin{pmatrix}-1/2\\-\sqrt{3}/2\end{pmatrix}}_{\text{rep'n of \fcolorbox{dkred}{dkred}{\textcolor{white}{dark red}}}}
~~ = ~~
\begin{pmatrix}1/3\\0\end{pmatrix}
~~ = ~~
\underbrace{1\begin{pmatrix}1\\0\end{pmatrix} + 2/3\begin{pmatrix}-1/2\\\sqrt{3}/2\end{pmatrix} + 2/3\begin{pmatrix}-1/2\\-\sqrt{3}/2\end{pmatrix}}_{\text{rep'n of \fcolorbox{peachy}{peachy}{medium salmon}}}.
\]
This is where feature sparsity comes into play: 
If we only allow colors that are nonnegative combinations of at most $k=2$ of the vectors, then the $2$-dimensional representation uniquely determines the underlying color.  
On the other hand, salmon and even white cannot be represented in this model since red, green, and blue are all present in these colors.\footnote{A color theorist will note that in this model, every hue is represented by a color of maximum saturation within the hue equivalence class.}

Note that the feature vectors in this second model point to the vertices of an equiangular triangle centered at the origin and are thus spread apart as far as possible for $3$ unit vectors in $\bR^2$.
In some quantitative sense, these feature vectors are as independent as possible.  
We will revisit this geometry in the next section.

\begin{figure}[htp!]
\includegraphics[width=\textwidth]{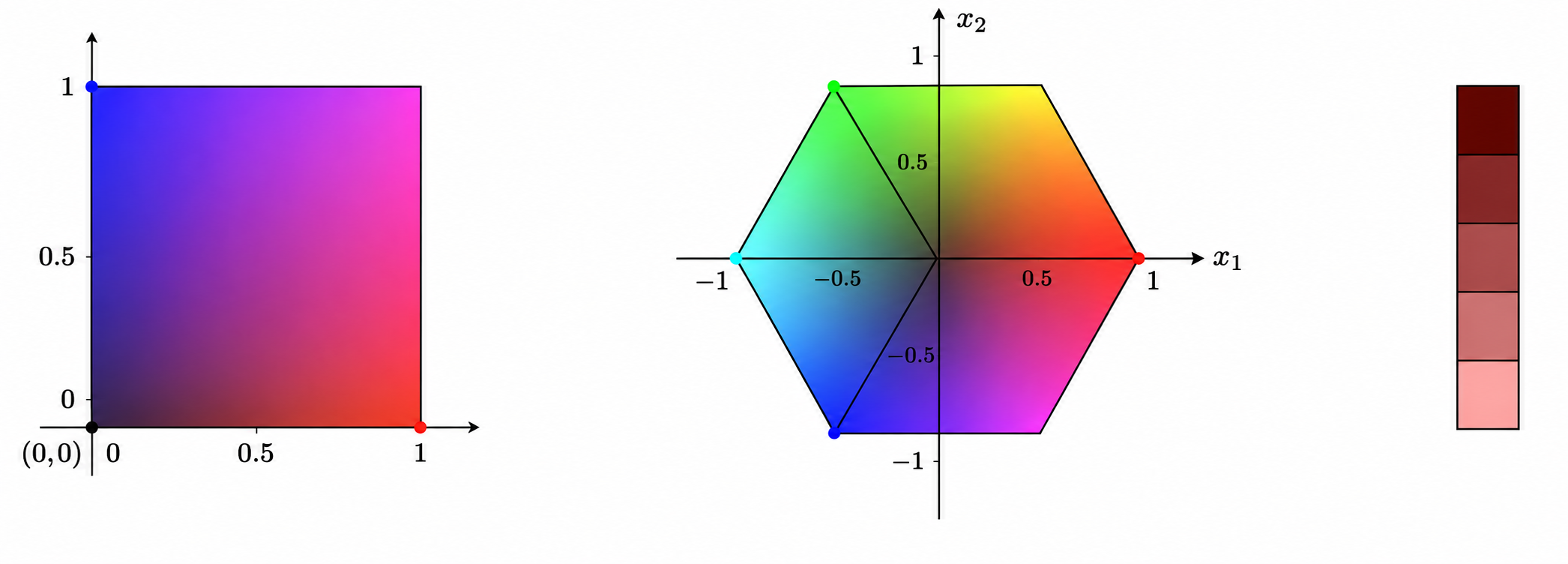}
\caption{\textbf{Left:} Every color represented by nonnegative linear combinations of the vectors $(1,0)$ and $(0,1)$, which represent red and blue, respectively.
\textbf{Middle:} Every color represented by nonnegative linear combinations of at most two of the vectors $(1,0)$, $(-1/2, \sqrt{3}/2)$, and $(-1/2, -\sqrt{3}/2)$, which represent red, green, and blue, respectively.
\textbf{Right:} Different saturations of the same hue that are all represented by the same vector as a (in general not $2$-sparse) linear combination of $(1,0)$, $(-1/2, \sqrt{3}/2)$, and $(-1/2, -\sqrt{3}/2)$, namely, $(1/3,0)$. 
 A $2$-sparsity assumption ensures unique representation, but also that not all colors are representable. 
In this case, only the darkest shade in the color bar is representable. An interactive version can be viewed at \protect\url{https://miv.name/web-tools/color-superposition.html}}
\label{fig:colors}
\end{figure}

\subsection{Frames and sparsity insist on themselves}

There are two key assumptions in the model of superposition from~\cite{Anthropic22}:
(1) there are more features than neurons, and 
(2) the features are sparsely active. 
These concepts are naturally viewed through the lenses of frame theory and compressed sensing. 
Frames are overcomplete generalizations of orthonormal bases that were originally introduced as a generalization of Fourier series~\cite{duffin1952class} and have connections to open problems in quantum information theory and combinatorial design theory (see, e.g., \cite{zauner1999grund,zauner2011found,gillespie2019equi}).  
Special highly structured frames (specifically, \emph{equiangular tight frames} defined below) have appeared as the final weight matrix of certain overtrained neural networks~\cite{papyan2020prev}. 
By virtue of their overcompleteness, frames allow for redundant representations of data, and this makes them a natural fit for analyzing superposition.  
Relatedly, compressed sensing, dictionary learning, and other sparsity-based methods concern the measurement and analysis of data which are sparse linear combinations of special vectors. See, e.g.,~\cite{waldron2018intro,foucart2013math,elad2010sparse} for general references of finite frame theory and sparsity-based methods.

Recall the three vectors from our simple example that represented colors as points in $\bR^2$ (see Figure~\ref{fig:colors}).
This example worked out well thanks to certain nice properties of these vectors. 
Consider a matrix $W \in \bR^{d \times n}$ with $n \geq d$. 
We say the columns $(w_i)_{i=1}^n$ of $W$ form a \emph{tight frame} if 
\beq\label{eqn:tightframe}
W W^\top = \sum_{i=1}^n w_i w_i^\top =  A I_d
\eeq
for some \emph{frame bound} $A > 0$. 
Tight frames are desirable because they give ``painless'' reconstruction formulas, i.e., for every $x \in \bR^d$, it holds that
\[
x  
= \frac{1}{A} \sum_{i=1}^n \ip{x}{w_i}w_i.
\]
If a tight frame $W$ is such that $\norm{w_i}=1$ and $\absip{w_i}{w_j}$ is constant for $i \neq j$, then it is an \emph{equiangular tight frame (ETF)}. 
Relatedly, $W$ is said to form a \emph{(sparse) dictionary} if data of interest is well-approximated by nonnegative linear combinations of a few of the columns $w_i$.
In our toy color example, the three feature vectors formed an ETF with $A = 3/2$ and with different vectors having absolute inner product $1/2$. This $W$ was also a sparse dictionary for the primary and secondary colors.  
If $W$ is such that the columns $(w_i)_{i=1}^n$ have unit norm (with no other assumed structure), then the \emph{coherence} of $W$ is
\[
\mu(W) = \max_{i \neq j} \absip{w_i}{w_j}.
\]
The \emph{Welch--Rankin bound} \cite{Ran55,Welch} is
\[
\mu(W) \geq \sqrt{\frac{n-d}{d(n-1)}}.
\]
This bound is saturated precisely when $W$ is an ETF.  
This means that ETFs consist of vectors that are, in some sense, as close as possible to being pairwise orthogonal given the fact that they are overcomplete. 
Note that if the columns of $W$ form an ETF, then the Gram matrix $W^\top W$ has rank $d$, the diagonal entries all equal $1$, and the off-diagonal entries have modulus equal to the Welch--Rankin bound, i.e., $\sim 1/\sqrt{d}$; that is, the Gram matrix is a low-rank entry-wise approximation to the identity matrix.
We call an ETF \emph{centered} if the all--ones vector $\mathbbm{1}$ is in the kernel of $W^\top W$~\cite{fickus2018equi} and a \emph{simplex ETF} if $n=d+1$.
There exists an ETF $W \in \bR^{d \times n}$ if and only if there also exists an ETF $V \in \bR^{(n-d)\times n}$ with $\sign(\ip{w_i}{w_j})=-\sign(\ip{v_i}{v_j})$.  
Any such $V$ is called a \emph{Naimark complement} of $W$~\cite{neumark1943representation,han2000frames}. 
Further, given a vector $x \in \bR^d$, we let $\norm{x}_0$ denote the cardinality of its support, i.e., its \emph{sparsity}.
(Sadly, this standard choice of nomenclature means that $x$ is \textit{more sparse} when its sparsity $\|x\|_0$ is \textit{smaller}.)
Finally, non-linear maps called \emph{activation functions} are critical components of the architecture of neural networks.
A common choice of activation function is ReLU, which sparsifies a vector by returning $\max\{x_i,0\}$ for each coordinate $x_i$ of $x$.

Tight frames and ETFs have come up in different contexts generally in the study of neural networks and specifically in the analysis of superposition (see, e.g.,~\cite{Anthropic22,scherlis2022polysem,ivanov2026spectral,borobia2026linear,cowsik2024persian,liu2025superposition,papyan2020prev}).  We analyze these connections in more detail in Section~\ref{sec:model}.

Sparsity provides a well-established model of data that facilitates the study of superposition. 
Recalling our toy example from earlier, we cannot represent all three dimensions of color in the plane. However, there is no difficulty representing a large palette of $2$-sparse colors; sparsity is key to this dimensionality reduction.
Fortuitously, many datasets of interest can be decomposed as sparse linear combinations of feature vectors or dictionary atoms. 
This observation underlies not only classical compression algorithms like JPEG~\cite{pennebaker1992jpeg} and most of the compressed sensing literature (see, e.g.,~\cite{foucart2013math,elad2010sparse}), but also work on superposition in neural networks~\cite{sharkey2022taking,Anthropic22}.

Anthropic numerically probed a number of regimes of superposition arising in the training of neural networks and created a toy model of superposition based on their results~\cite{Anthropic22}.
To test for meaningful concepts in superposition in a neural network, we adopt Anthropic's toy model, which can be succinctly represented as
\[
\hat{x} :=\ReLU(W^\top Wx + b).
\]
By assumption, the vector of neuron activations is well approximated by a sparse non-negative linear combination $Wx$ of dictionary vectors.
Given this, we apply $W^\top$ to analyze the combination before applying a bias vector $b$ and the activation function ReLU in order to obtain an estimate $\hat{x}$ of the underlying sparse coefficients $x$. 
The focal point of superposition is how well $\hat{x}$ approximates $x$.

\subsection{Outline}
We begin in Section~\ref{sec:model} by presenting various types of superposition---differentiating between the mathematical models---and discussing other related work. 
In particular, our model of superposition is based on \textit{support recovery} and is simpler than the approximate sparse vector recovery used in~\cite{Anthropic22}, but leads to provable sufficient conditions on $W$, $x$, and $b$ for support recovery in superposition to occur.
Furthermore, these sufficient conditions align with properties that emerge in numerical experiments~\cite{liu2025superposition,cowsik2024persian,Anthropic22} with a more complicated model of superposition.

Our results have two flavors: In our \textit{random support model}, $x$ is a random vector with a certain expected sparsity level, while in our \textit{worst-case support model}, we let $x$ be any vector of a certain sparsity level. Coarsely, we can accommodate superposition in almost all signals of support order $d$ (random support model) and all signals of support order $\sqrt{d}$ (worst-case support model).  
We start with the random support model. In Section~\ref{sec:randvec}, we draw $x$ randomly with Bernoulli i.i.d.\ entries with parameter $p$.  We show in Theorem~\ref{thm:randvec} that if $p$ is upper bounded by quantities arising from the entries of $W^\top W$ and the spectral norm of $W$, then we have superposition with high probability.
We move on to proving some key lemmas in Section~\ref{sec:keylem} that identify when superposition occurs in the worst-case support model.  
The first main theorem in the worst-case support model concerns superposition when $W$ is random.  Namely, in Theorem~\ref{thm:maingaussian} in Section~\ref{sec:randW} we prove that when $W$ is a standard Gaussian i.i.d.\ matrix, superposition occurs with high probability when $\norm{x}_0 \lesssim\sqrt{d/\log n}$. 
In Section~\ref{sec:ETFs}, we probe the case that $W$ is an ETF. The main result is Theorem~\ref{thm:mainETF}, where it is shown that superposition occurs when $\norm{x}_0 \lesssim \sqrt{d}$. 
Along the way, we prove a structural result concerning ETFs (Proposition~\ref{prop:ETFevensp} and Remark~\ref{rem:numpm}) that might be of independent interest to those studying frame theory; namely, we show that if an ETF has $n > d+1$, then there is at most one column vector whose inner products with the other vectors are predominantly positive or negative.  
In other words, the sign patterns in the rows of the Gram matrix are relatively balanced.  

\section{Models of Superposition}\label{sec:model}

In a toy model of superposition, the key question is:
\begin{quote}
When does a $d \times n$ weight matrix $W$ with $d < n$ and bias vector $b \in \bR^n$ allow
\[
\ReLU(W^\top W x + b)
\]
to be a ``good'' representation of any ``sparse'' vector $x\in \bR^n$?
\end{quote}
Our approach to this question is simple enough to allow for theory in different regimes, while simultaneously aligning with the heuristic results seen in other, more complex models, as described below.

\subsection{The mainstream approach: Mean squared error}

In Anthropic's work~\cite{Anthropic22} and in work by others (e.g.,~\cite{liu2025superposition,cowsik2024persian}), the entries of $x$ are chosen independently with $x_i = c_i u_i$, $c_i \sim \textrm{Bernoulli}(p_i)$, $u_i \sim \textrm{Uniform}([0,a])$, for some $a >0$, and the goal is to learn $W$ and $b$ that minimize the \textit{mean squared error}:
\[
\bE_x \norm{x-\ReLU(W^\top Wx + b)}_2^2.
\]
For $n$ large enough compared to $d$ and $\sum_i p_i$ sufficiently small, $W^\top W$ and $b$ have similar emergent properties across different numerical experiments. In \cite{Anthropic22}, the Bernoulli probabilities are equal and in certain cases, the learned $W$ resembles an ETF, while in~\cite{liu2025superposition}, the restriction of the learned $W$ to columns corresponding to features with (relatively) larger $p_i$ is ETF-like. The authors of~\cite{cowsik2024persian} heuristically analyze a toy superposition model that decouples the encoding matrix $W_{\text{in}}$ from the decoding matrix $W_{\text{out}}^\top$. Despite this decoupling, they find that asymptotically, the product $W_{\text{out}}^\top W_{\text{in}}$ has equal diagonal entries and off-diagonal entries that are small in absolute value and have zero mean. 
(N.B., this resembles the Gram matrix $W^\top W$ in cases where $W$ is an approximately centered ETF or a matrix with i.i.d.\ standard Gaussian entries.) In these papers, the learned $b$ is approximately $\beta \mathbbm{1}$ for small $\beta < 0$ and $\mathbbm{1}$ the all--ones vector.

\subsection{Our approach: Support recovery}

Instead of minimizing the mean squared error between $\ReLU(W^\top W x + b)$ and $x$, we ask that these vectors have the \textit{same support}.
In order for this to even be feasible, we prevent $x$ from having any small nonzero entries by forcing $x \in \{0,1\}^n$.
In this setting, we consider two different models for $x$:
\begin{itemize}
\item[]\hspace{-0.2in}
\textbf{Random support model.}
$x$ is a vector with i.i.d.\ $\operatorname{Bernoulli}(p)$ entries for some $p\in(0,1)$.
\item[]\hspace{-0.2in}
\textbf{Worst-case support model.}
$x$ is \textit{any} vector in $\{0,1\}^n$ with at most $k$ nonzero entries.
\end{itemize}
We show that under both models, an appropriate choice of $b$ delivers \textit{support recovery}, i.e.,
\[
\operatorname{supp}\big(\ReLU(W^\top W x + b)\big)
=\operatorname{supp}(x),
\]
provided $W$ resembles an ETF in some sense.

While support recovery may seem like a departure from the mainstream mean-squared-error approach, we highlight three attractive features of our alternative.

\medskip
\noindent
\textbf{Support recovery captures part of the essence of superposition.}
In settings where $W$ and $b$ \textit{empirically} perform well in the context of mean squared error, we are able to \textit{prove} that they perform support recovery.
Indeed, we prove support recovery under the random support model when
\begin{itemize}
\item $W$ is nearly tight with low coherence (Theorem~\ref{thm:randvec}), and 
\item in this setting, we also are able to bound $\norm{x-\ReLU(W^\top Wx + b)}_{\infty}$ (Theorem~\ref{thm:randvec}).
\end{itemize}
We also prove support recovery under the worst-case support model when
\begin{itemize}
    \item $W$ has Gaussian i.i.d.\ entries (Theorem~\ref{thm:maingaussian}), or
    \item $W$ is an ETF  (Theorem~\ref{thm:mainETF}).
\end{itemize} 
In all of these cases, analysis of the proofs yields that $W^\top W$ has diagonal elements approximately equal and off-diagonal elements small and approximately equal in absolute value, while $b$ is approximately a small negative multiple of $\mathbbm{1}$, aligning with the numerical results of the more complex superposition models. In other words, $W^\top W$ is a low-rank approximation of the $n \times n$ identity and $b$ offsets the noise introduced since $W^\top W$ is not the identity.  

The geometric intuition behind this is shown in Figure~\ref{fig:R2super}.  We start with a $1$-sparse $x \in \{0,1\}^2$, i.e., $(1,0)$ or $(0,1)$. Then, multiplying by $W^\top W$ slightly perturbs the $1$-sparse $x$ so that it is no longer $1$-sparse (in blue, left).  By adding a small negative multiple $b$ of $\mathbbm{1}$ (in green, middle left), the entries of $W^\top W x +b$ with indices not in the support of $x$ are now negative and will be mapped to $0$ under $\ReLU$ (in orange, middle right). For $i$ in the support of $x$, $\ReLU(W^\top W x + b)$ is not too far from $1$ and can be approximated by a small scaling (in red, right). The modification from the middle right plot to the right plot in Figure~\ref{fig:R2super} also gives some intuition why similar $W$'s and $b$'s yield superposition in our model and the model from Anthropic. In particular, since $\ReLU$ is positively homogeneous, one may simply take $W\leftarrow \sqrt{\alpha}\cdot W$ and $b\leftarrow \alpha\cdot b$ for the appropriate choice of $\alpha > 0$ for $\ReLU(W^\top Wx + b)$ to approximately recover $x$.  A visualization of superposition in higher dimensions appears in Figure~\ref{fig:lollipops}.  Here, $x$ is a $6$-sparse vector in $\{0,1\}^{50}$ (in black, top left).  $W$ is chosen to be a centered $(25,50)$-ETF, and $W^\top Wx$ slightly perturbs $x$ (in blue, top right).  Also note that the entries of $W^\top W x$ which are above $6/7$ (denoted by the dashed blue line, top right) are precisely the entries in the support of $x$.  Thus, for $b = -6/7 \cdot \mathbbm{1}$, $\ReLU(W^\top W x + b)$ has the same support as $x$ (in orange, bottom left).  Shifting by the negative $b$ decreased each entry; so, one may scale (e.g., by $\alpha = 7/3$ in this case) the entries to approximate $x$ by $\alpha\cdot \ReLU(W^\top W x + b)$ (in red, bottom right).

\medskip
\noindent
\textbf{Support recovery represents a natural first step in superposition theory.}
We view the support recovery as a first step, much like how support recovery predated sparse recovery, which in turn predated general compressed sensing.
Indeed, sparse support recovery has a long history in statistics~\cite{dorfman1943detection} and regression analysis (cf.\ \cite{miller2002subset} and sources therein), while sparse recovery came later, followed by the theoretical analysis of recovery of noisy and compressible vectors. (See \cite{foucart2013math,elad2010sparse} and the references therein.)

\medskip
\noindent
\textbf{A similar model already appears in the literature.}
In~\cite{borobia2026linear}, the authors are concerned with computation in superposition.  
Their model may be expressed as
\[
\ReLU(W^\top (Wx +c) -\tfrac{1}{2} \mathbbm{1})
\]
having the same support as $x$ for sparse $x \in \{0,1\}^n$.  
One may identify our $b$ with their $W^\top c -\tfrac{1}{2} \mathbbm{1}$, though not every $b$ takes this form when $n>d+1$.  
Also, their models for the sparsity of $x$ and the probability distributions for $W$ are distinct from ours.

\subsection{Related work}
In recent work using a different model of superposition~\cite{scherlis2022polysem} (namely, approximating $x^\top D x$ for full rank diagonal $D$ and sparse $x$ with $x^\top W^\top W x +b$ for $W$ low rank and $b$ scalar bias), it is shown that tight frames (called ``semiorthogonal'' and ``everything bagels'' in the paper) optimize the paper's measure of capacity of superposition. 
Additionally, the authors of~\cite{Anthropic22} define a measure called \emph{feature dimensionality} to quantify ``how much'' of a dimension is used to represent a given feature in superposition; it is shown in~\cite{ivanov2026spectral} that this measure is optimized when $W$ is a so-called tegum product of tight frames. Finally, the authors of~\cite{borobia2026linear} present a lower bound for linear readouts in superposition  with potentially different encoding and decoding matrices. They call the bound the ``biorthogonal Welch floor'' (known as the ``cross frame potential'' in frame theory~\cite{benedetto2003finite,aceska2022cross}) and note that when the encoding and decoding matrices are the same tight frame, the bound is saturated. (See~\cite{aceska2022cross} for a full characterization of when the bound is saturated.)

We end this section by noting that ETFs have already emerged in another aspect of neural network research.
\textit{Neural collapse} is an empirical phenomenon that arises in the terminal phase of training with a deep neural network: the final weight matrix resembles a simplex ETF~\cite{papyan2020prev}.
(Interestingly, the non-final constituent weight matrices of trained LLMs like GPT-2 have also been observed to exhibit ETF-like features~\cite{liu2025superposition}.)
In the \textit{unconstrained features model} of neural collapse introduced in~\cite{mixon2022neural}, the goal is to find a weight matrix $W$, a bias vector $b$, as well as feature representations $\{h_{ij}\}_{i\in[c],j\in[n]}$ of the training set (which consists of $n$ examples from each of $c$ classes) that together minimize the square loss
\[
\sum_{i=1}^c\sum_{j=1}^n \|(W^\top h_{ij}+b)-e_i\|_2^2.
\]
Here, $e_i$ is the $i$th standard basis element, i.e., the one-hot encoding of the $i$th class.
One may verify that the above loss function equals zero if
\begin{itemize}
\item[(1)] 
the columns of $W$ form the vertices of a regular simplex centered at the origin, 
\item[(2)] 
each $h_{ij}$ is an appropriate positive multiple of the $i$th column of $W$, and 
\item[(3)] 
$b$ is an appropriate positive multiple of the all--ones vector.
\end{itemize}
Despite certain qualitative similarities, we view the neural collapse phenomenon as distinct from superposition.

\begin{figure}
    \centering
    \includegraphics[page=1,width=0.23\linewidth]{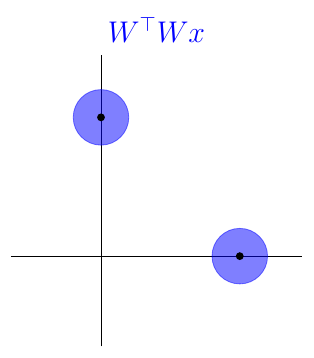}
    ~
    \includegraphics[page=2,width=0.23\linewidth]{SuperposVecPlot_mixon_edit_updated_thicker_titles.pdf}
    ~
    \includegraphics[page=3,width=0.23\linewidth]{SuperposVecPlot_mixon_edit_updated_thicker_titles.pdf}
    ~
    \includegraphics[page=4,width=0.23\linewidth]{SuperposVecPlot_mixon_edit_updated_thicker_titles.pdf}
    \caption{Intuition for superposition. Start with a nonnegative $k$-sparse vector $x$. In this case, $k=1$ and $x\in\{(1,0),(0,1)\}$. \textbf{Left:} Assume $W^\top W$ is close to the identity matrix in some sense, e.g., the columns of $W$ form an equiangular tight frame. Then $W^\top Wx$ can be interpreted as a noisy version of $x$, i.e., it resides in the blue disk about $x$. \textbf{Middle left:} Shift this disk by adding the bias vector~$b$, which tends to be close to a negative multiple of the all--ones vector. \textbf{Middle right:} At this point, we apply ReLU, which has the effect of projecting the disk onto the nonnegative orthant. In our case, the nonzero entries in $x$ were large relative to the noise added by applying $W^\top W$, and the effect is that every point in this projection has the same support as $x$. \textbf{Right:} If we care about recovering~$x$, not just its support, we can rescale the result so that it's close to $x$. Since ReLU is positively homogeneous, one can avoid rescaling after the ReLU by instead taking $W\leftarrow \sqrt{\alpha}\cdot W$ and $b\leftarrow \alpha\cdot b$.}
    \label{fig:R2super}
\end{figure}

\begin{figure}
    \centering
    \includegraphics[page=1,width=0.49\linewidth]{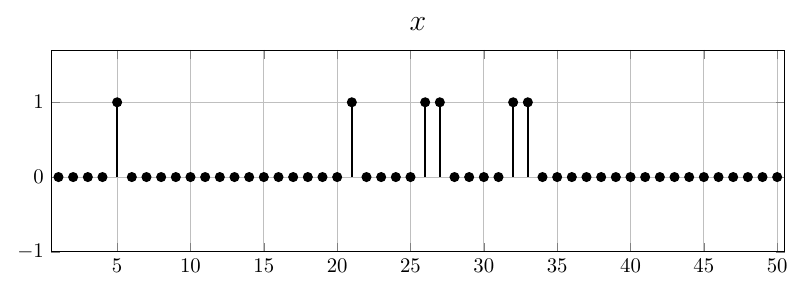}
    \includegraphics[page=2,width=0.49\linewidth]{lollipop_large_titles.pdf}
    \includegraphics[page=3,width=0.49\linewidth]{lollipop_large_titles.pdf}
    \includegraphics[page=4,width=0.49\linewidth]{lollipop_large_titles.pdf}
    \caption{Visualization of superposition. \textbf{Top left:} Start with a $k$-sparse binary vector $x$. In this case, $k=6$ and $x\in\mathbb{R}^{50}$. \textbf{Top right:} Fix a $25\times50$ equiangular tight frame $W$ that is \textit{centered}, meaning the column vectors sum to the zero vector. We plot the entries of $W^\top Wx$ and the threshold $\theta:=6/7$ as a dashed line. The entries above this threshold form the support of $x$. \textbf{Bottom left:} Taking $b:=-\theta\cdot\mathbbm{1}$, then $\operatorname{ReLU}(W^\top Wx+b)$ has the same support as $x$, but the entries are smaller.
    \textbf{Bottom right:} Multiplying by $\alpha:=7/3$ makes the nonzero entries closer to $1$. Since ReLU is positively homogeneous, one can avoid rescaling after the ReLU by instead taking $W\leftarrow \sqrt{\alpha}\cdot W$ and $b\leftarrow \alpha\cdot b$.}
    \label{fig:lollipops}
\end{figure}

\section{Random Support Model}\label{sec:randvec}

We begin by analyzing the case that $x$ is a Bernoulli i.i.d.\ vector with parameter $p$.  
In this case, we recover the support of $x$ with high probability assuming that $p$ is bounded by certain formulas involving the coherence of $W$ as well as its spectral norm.  
As in the toy models of superposition found in~\cite{Anthropic22,liu2025superposition,cowsik2024persian}, we also consider how well $\ReLU(W^\top W x +b)$ directly approximates $x$; however, their loss function is based on expected squared $\ell^2$ error, while we measure mismatch in the $\ell^{\infty}$ norm and prove a high-probability guarantee.

\begin{theorem}\label{thm:randvec}
Fix $p\in(0,1)$, and define $b\in\mathbb{R}^n$ by
\[
b_i
:=-\frac{1}{2}\|w_i\|^2-p\sum_{\substack{j=1\\j\neq i}}^n\langle w_i,w_j\rangle.
\]
Suppose the entries of $x$ are independent Bernoulli random variables with parameter $p$, and take
\[
\hat{x}:=2\cdot\operatorname{ReLU}(W^\top Wx+b).
\]
Consider the dictionary parameters 
\[
\alpha
:=\min_{i\in[n]}\|w_i\|^2,
\qquad
\mu
:=\max_{\substack{i,j\in[n]\\i\neq j}}|\langle w_i,w_j\rangle|,
\]
and let \(\varepsilon>0\) be given.
\begin{itemize}
\item[(a)]
With probability $\geq1-\varepsilon$, it holds that $\hat{x}$ has the same support as $x$, provided
\[
p
\leq\frac{1}{\|W\|_{2\to2}^2}\cdot\bigg(\frac{\alpha}{8\log(2n/\varepsilon)}-\frac{\mu}{6}\bigg).
\]
\item[(b)]
Suppose in addition that $\|w_i\|=1$ for all $i\in[n]$, and fix $\delta\in(0,1)$.
Then with probability $\geq1-\varepsilon$, it simultaneously holds that $\hat{x}$ has the same support as $x$ and $\|\hat{x}-x\|_\infty\leq\delta$, provided
\[
p
\leq\frac{1}{\|W\|_{2\to2}^2}\cdot\bigg(\frac{\delta^2}{8\log(2n/\varepsilon)}-\frac{\delta\mu}{6}\bigg).
\]
\end{itemize}
\end{theorem}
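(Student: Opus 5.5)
Fix $i\in[n]$ and write $(W^\top Wx+b)_i = \|w_i\|^2(x_i-\tfrac12) + Z_i$, where
\[
Z_i:=\sum_{j\neq i}\langle w_i,w_j\rangle (x_j-p).
\]
Here $Z_i$ is a sum of independent, mean-zero, bounded random variables, and it does not depend on $x_i$. We then check both cases. If $x_i=0$, the coordinate equals $-\tfrac12\|w_i\|^2+Z_i$. If $x_i=1$, it equals $\tfrac12\|w_i\|^2+Z_i$. So support recovery at coordinate $i$ holds whenever $|Z_i|<\tfrac12\|w_i\|^2$, which is implied by $|Z_i|<\alpha/2$. For part (b), with unit norms, $\hat x_i = 2\,\mathrm{ReLU}(x_i-\tfrac12+Z_i)$. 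If $x_i=1$, then $\hat x_i=1+2Z_i$ provided $Z_i>-\tfrac12$. If $x_i=0$, then $\hat x_i=0$ provided $Z_i<\tfrac12$. Hence $|Z_i|\le \delta/2$ (with $\delta<1$) gives both the correct support and $\|\hat x-x\|_\infty\le\delta$. In both parts the task is therefore to bound $\max_i|Z_i|$ by a threshold $t$, with $t=\alpha/2$ in (a) and $t=\delta/2$ in (b). A union bound over $i$ means we need $\Pr(|Z_i|\ge t)\le \varepsilon/n$ for each $i$.

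For this we use Bernstein's inequality. Each summand $Y_j=\langle w_i,w_j\rangle(x_j-p)$ satisfies $|Y_j|\le \mu$, so $M=\mu$. The variance is
\[
\sum_{j\neq i}\langle w_i,w_j\rangle^2 p(1-p)\le p\sum_{j}\langle w_i,w_j\rangle^2 = p\,\|W^\top w_i\|^2\le p\|W\|_{2\to2}^2\|w_i\|^2 .
\]
Bernstein then gives
\[
\Pr(|Z_i|\ge t)\le 2\exp\!\Big(-\frac{t^2/2}{\sigma^2+Mt/3}\Big).
\]
This is at most $\varepsilon/n$ as soon as $\sigma^2+\mu t/3\le t^2/(2\log(2n/\varepsilon))$.

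Now we substitute the thresholds. In (b), $t=\delta/2$ and $\|w_i\|=1$, so the condition becomes
\[
p\|W\|^2\le \frac{\delta^2}{8\log(2n/\varepsilon)}-\frac{\delta\mu}{6},
\]
which is exactly the hypothesis. In (a), take $t=\alpha/2$. If we bound $\sigma^2\le p\|W\|^2\|w_i\|^2$, the factor $\|w_i\|^2$ does not match the stated condition when the norms are unequal. So we work coordinatewise with $t_i=\|w_i\|^2/2$, where $\|w_i\|^2=:a_i\ge\alpha$. The required condition is then
\[
p\|W\|^2 a_i+\frac{\mu a_i}{6}\le \frac{a_i^2}{8\log(2n/\varepsilon)},
\]
that is,
\[
p\|W\|^2\le \frac{a_i}{8\log(2n/\varepsilon)}-\frac{\mu}{6}.
\]
The right side is increasing in $a_i$, so it is implied by the version with $\alpha$, which is the hypothesis of (a).

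The main obstacle is this bookkeeping. We must use the per-coordinate threshold $\tfrac12\|w_i\|^2$, not $\alpha/2$, so that the variance factor $\|w_i\|^2$ cancels. We must also use the correct Bernstein form (with constant $1/3$ on $Mt$), so that the constants $8$ and $6$ come out exactly. The remaining steps, including the union bound, are routine.
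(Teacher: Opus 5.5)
Your proposal is correct and matches the paper's proof: the same decomposition $(W^\top Wx+b)_i=\|w_i\|^2(x_i-\tfrac12)+\sum_{j\neq i}\langle w_i,w_j\rangle(x_j-p)$, the same variance bound $p\|W\|_{2\to2}^2\|w_i\|^2$ with $M=\mu$ in Bernstein's inequality, the per-coordinate threshold $t=\tfrac12\|w_i\|^2$ in (a) and $t=\delta/2$ in (b), and a union bound over $i$. Your explicit check of the constants, including why the uniform threshold $\alpha/2$ would not match the hypothesis when the norms differ, fills in algebra that the paper leaves implicit.
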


\begin{rem}
\label{rem.interpretation}
To interpret Theorem~\ref{thm:randvec}, consider the case where $W$ is a \textit{unit norm tight frame} of somewhat low coherence, by which we mean
\[
\|W\|_{2\to2}^2=\frac{n}{d},
\qquad
\alpha=1,
\qquad
\mu\ll\frac{1}{\log n}.
\]
Then Theorem~\ref{thm:randvec}(a) gives that $\operatorname{ReLU}(W^\top Wx+b)$ has the same support as $x$ with high probability provided $p=k/n$ and $k\ll d/\log n$.
(Here, $k$ represents the expected sparsity level of $x$.)
This is in stark contrast with the worst-case analysis in the subsequent sections that requires $x$ to have far fewer nonzero entries: $k=O(\sqrt{d})$.
This difference between average-case and worst-case performance (and for that matter, our proof of Theorem~\ref{thm:randvec}) rhymes with various observations previously made in~\cite{bajwa2012two} in the context of compressed sensing.
If in addition, $W$ is centered in the sense that its column vectors sum to the zero vector, then the bias vector in Theorem~\ref{thm:randvec} reduces to a negative multiple of the all-ones vector: $b=(-\frac{1}{2}+\frac{k}{n})\mathbbm{1}$.
This matches the superposition behavior observed in previous literature~\cite{liu2025superposition,cowsik2024persian}.
\end{rem}

\begin{proof}[Proof of Theorem~\ref{thm:randvec}]
For each $i\in[n]$, it holds that
\[
(W^\top Wx+b)_i
=\sum_{j=1}^n\langle w_i,w_j\rangle x_j+b_i
=\|w_i\|^2\cdot\bigg(x_i-\frac{1}{2}\bigg)+\sum_{\substack{j=1\\j\neq i}}^n\langle w_i,w_j\rangle\cdot(x_j-p).
\]
Thus, $\hat{x}$ has the same support as $x$ when
\[
\bigg|\sum_{\substack{j=1\\j\neq i}}^n\langle w_i,w_j\rangle\cdot(x_j-p)\bigg|
<\frac{1}{2}\|w_i\|^2
\qquad
\forall\,i\in[n],
\]
and in the case where $\|w_i\|=1$ for all $i\in[n]$, we may further conclude $\|\hat{x}-x\|_\infty\leq\delta$ when
\[
\bigg|\sum_{\substack{j=1\\j\neq i}}^n\langle w_i,w_j\rangle\cdot(x_j-p)\bigg|
<\frac{\delta}{2}
\qquad
\forall\,i\in[n].
\]
Fix $i\in[n]$, and denote the random variable $Z_j:=\langle w_i,w_j\rangle\cdot(x_j-p)$. 
Then for every $j\neq i$, we have $|Z_j|\leq\mu$ almost surely with $\mathbb{E}[Z_j]=0$ and $\mathbb{E}[Z_j^2]=p(1-p)\cdot\langle w_i,w_j\rangle^2$.
Observe that
\[
\sum_{\substack{j=1\\j\neq i}}^n\mathbb{E}[Z_j^2]
\leq p\sum_{\substack{j=1\\j\neq i}}^n\langle w_i,w_j\rangle^2
\leq p\sum_{j=1}^n\langle w_i,w_j\rangle^2
=p\|W^\top w_i\|^2
\leq p\|W\|_{2\to2}^2\|w_i\|^2.
\]
Then Bernstein's inequality gives
\[
\mathbb{P}\bigg\{
\bigg|\sum_{\substack{j=1\\j\neq i}}^n\langle w_i,w_j\rangle\cdot(x_j-p)\bigg|\geq t\bigg\}
=\mathbb{P}\bigg\{
\bigg|\sum_{\substack{j=1\\j\neq i}}^nZ_j\bigg|\geq t\bigg\}
\leq 2\operatorname{exp}\bigg(-\frac{\frac{1}{2}t^2}{p\|W\|_{2\to2}^2\|w_i\|^2+\frac{1}{3}\mu t}\bigg).
\]
For (a), take $t:=\frac{1}{2}\|w_i\|^2$ and apply the union bound over $i\in[n]$.
For (b), take $t:=\frac{\delta}{2}$ and apply the union bound over $i\in[n]$.
\end{proof}

\section{Worst-Case Support Model}\label{sec:keylem}
In the remainder of the paper, we consider the worst-case support model.  
Namely, we provide sufficient conditions for $\ReLU(W^\top Wx +b)$ to have the same support as $x$ whenever $\norm{x}_0 \leq k$.
In this section, we provide some key lemmas that will allow us to analyze two different settings: when the entries of $W$ are i.i.d.\ Gaussian (Section~\ref{sec:randW}) and when $W$ is an ETF (Section~\ref{sec:ETFs}).
We begin with a definition.

\begin{definition}
Given $W\in\mathbb{R}^{d\times n}$, let $k(W)$ denote the largest $k$ for which there exists $b\in\mathbb{R}^n$ such that for every $x\in\{0,1\}^n$ with $\|x\|_0\leq k$, it holds that $\operatorname{ReLU}( W^\top Wx + b )$ has the same support as $x$. 
\end{definition}

In classical compressed sensing, winning is equivalent to $W$ satisfying the null space property \cite{cohen2009compressed}, but this property is difficult to verify for a given $W$. 
By contrast, in our setting, winning is equivalent to an easy-to-verify property.
Indeed, in what follows, we characterize $k(W)$ in terms of certain computable statistics.
Given a real number $x$, let $x_+=\operatorname{ReLU}(x)$ denote its positive part, and let $x_-=\operatorname{ReLU}(-x)$ denote its negative part.
In particular, $x=x_+-x_-$. 
Finally, given a real vector $v$, let $\Sigma^{(k)}v$ denote the sum of the $k$ greatest (i.e., most positive) entries of $v$.

\begin{lemma}
\label{lem.char}
The inequality $k\leq k(W)$ is equivalent to
\[
\Sigma^{(k)}(\langle w_i,w_j\rangle_+)_{j\in[n]\setminus\{i\}}
+\Sigma^{(k-1)}(\langle w_i,w_j\rangle_-)_{j\in[n]\setminus\{i\}}
<\|w_i\|^2
\qquad
\forall\, i\in[n].
\]
\end{lemma}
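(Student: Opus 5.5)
The plan is to decouple the condition coordinate by coordinate, since $b$ may be chosen freely in each entry. First I will note that "$k\le k(W)$" is equivalent to "there exists $b$ that achieves support recovery for all $x\in\{0,1\}^n$ with $\|x\|_0\le k$". The forward direction uses the definition of $k(W)$ as a maximum. For the converse, any $b$ that works at level $k(W)$ also works at every smaller level, because the set of admissible $x$ only shrinks. So I only need to characterize when such a $b$ exists at level $k$.

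Write $G=W^\top W$ with entries $G_{ij}=\langle w_i,w_j\rangle$, and identify $x$ with its support $S$, so that $(Gx+b)_i=\sum_{j\in S}G_{ij}+b_i$. Support recovery for every $S$ with $|S|\le k$ means two things for every $i$. If $i\in S$, then $G_{ii}+\sum_{j\in S\setminus\{i\}}G_{ij}+b_i>0$. If $i\notin S$, then $\sum_{j\in S}G_{ij}+b_i\le 0$. Each of these constraints involves only the single entry $b_i$. Hence a suitable $b$ exists if and only if, for each $i$, some real $-b_i$ satisfies
\[
M_i:=\max_{\substack{S\subseteq[n]\setminus\{i\}\\|S|\le k}}\sum_{j\in S}G_{ij}\;\le\;-b_i\;<\;\|w_i\|^2+\min_{\substack{T\subseteq[n]\setminus\{i\}\\|T|\le k-1}}\sum_{j\in T}G_{ij}=:\|w_i\|^2-m_i .
\]
Such a value exists precisely when $M_i<\|w_i\|^2-m_i$, that is, when $M_i+m_i<\|w_i\|^2$.

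It remains to evaluate the two extremal sums.

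For $M_i$: taking $S=\emptyset$ shows $M_i\ge0$, and this case is exactly the constraint coming from $x=0$. To maximize a sum over at most $k$ indices, one keeps only positive entries and takes the largest ones. Therefore $M_i=\Sigma^{(k)}(\langle w_i,w_j\rangle_+)_{j\ne i}$, where zeros among the positive parts account for the option of using fewer than $k$ indices.

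For $m_i$: symmetrically, minimizing over at most $k-1$ indices means choosing the most negative entries. This gives $m_i=\Sigma^{(k-1)}(\langle w_i,w_j\rangle_-)_{j\ne i}$.

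Substituting these two expressions into $M_i+m_i<\|w_i\|^2$ yields exactly the displayed inequality, which proves both directions at once.

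No step is deep. The only points I expect to need care with are the bookkeeping issues. The first is the strict-versus-nonstrict inequality, which follows from the convention that $\operatorname{ReLU}$ outputs a zero entry exactly when its input is $\le 0$. The second is the edge cases, namely $k=0$ and $k>n-1$; there I will interpret $\Sigma^{(k)}$ of a vector with fewer than $k$ entries as the sum of all of its entries, which is consistent with padding by zeros.
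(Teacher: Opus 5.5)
Your proposal is correct and is essentially the paper's proof: both reduce, coordinate by coordinate, to the sandwich $\max_{|S|\le k}\sum_{j\in S}\langle w_i,w_j\rangle \le -b_i < \|w_i\|^2+\min_{|T|\le k-1}\sum_{j\in T}\langle w_i,w_j\rangle$, and then identify the two extremal sums with $\Sigma^{(k)}$ of the positive parts and $-\Sigma^{(k-1)}$ of the negative parts. The differences are cosmetic. For the converse, the paper writes down the left endpoint $b_i=-\Sigma^{(k)}(\langle w_i,w_j\rangle_+)_{j\neq i}$ explicitly, whereas you argue that the interval is nonempty. You also state the monotonicity step (a $b$ that works at level $k(W)$ works at every smaller $k$), which the paper uses without comment.
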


\begin{proof}
For ($\Rightarrow$), suppose $k\leq k(W)$.
Then there exists $b\in\mathbb{R}^n$ such that for every $x\in\{0,1\}^n$ with $\|x\|_0\leq k$, it holds that $\operatorname{ReLU}( W^\top Wx + b )$ has same support as $x$.
Fix $i\in[n]$.
Then for every $S\subseteq[n]\setminus\{i\}$ with $|S|\leq k$, taking $x:=\sum_{j\in S}e_j$ gives
\[
\operatorname{ReLU}( W^\top Wx + b )_i
=0,
\]
i.e.,
\[
\sum_{j\in S}\langle w_i,w_j\rangle+b_i
=(W^\top Wx + b )_i
\leq 0.
\]
Meanwhile, for every $T\subseteq[n]\setminus\{i\}$ with $|T|\leq k-1$, taking $x:=\sum_{j\in T\cup\{i\}}e_j$ gives
\[
\operatorname{ReLU}( W^\top Wx + b )_i
>0,
\]
i.e.,
\[
\|w_i\|^2+\sum_{j\in T}\langle w_i,w_j\rangle+b_i
=( W^\top Wx + b )_i
>0.
\]
By isolating $-b_i$ in both inequalities, it follows that
\[
\sum_{j\in S}\langle w_i,w_j\rangle
\leq -b_i
<\|w_i\|^2+\sum_{j\in T}\langle w_i,w_j\rangle.
\]
Meanwhile,
\[
\max_{\substack{S\subseteq[n]\setminus\{i\}\\|S|\leq k}}
\sum_{j\in S}\langle w_i,w_j\rangle
=\Sigma^{(k)}(\langle w_i,w_j\rangle_+)_{j\in[n]\setminus\{i\}},
\quad
\min_{\substack{T\subseteq[n]\setminus\{i\}\\|T|\leq k-1}}\sum_{j\in T}\langle w_i,w_j\rangle
=-\Sigma^{(k-1)}(\langle w_i,w_j\rangle_-)_{j\in[n]\setminus\{i\}},
\]
and so the claimed inequality follows.

For ($\Leftarrow$), take $b\in\mathbb{R}^n$ defined by
\[
b_i
:=-\Sigma^{(k)}(\langle w_i,w_j\rangle_+)_{j\in[n]\setminus\{i\}}.
\]
Then for every $S\subseteq[n]\setminus\{i\}$ with $|S|\leq k$, taking $x:=\sum_{j\in S}e_j$ gives
\[
(W^\top Wx + b )_i
=\sum_{j\in S}\langle w_i,w_j\rangle-\Sigma^{(k)}(\langle w_i,w_j\rangle_+)_{j\in[n]\setminus\{i\}}
\leq 0,
\]
and so $\operatorname{ReLU}( W^\top Wx + b )_i=0$.
Meanwhile, for every $T\subseteq[n]\setminus\{i\}$ with $|T|\leq k-1$, taking $x:=\sum_{j\in T\cup\{i\}}e_j$ gives
\begin{align*}
( W^\top Wx + b )_i
&=\|w_i\|^2+\sum_{j\in T}\langle w_i,w_j\rangle-\Sigma^{(k)}(\langle w_i,w_j\rangle_+)_{j\in[n]\setminus\{i\}}\\
&>\|w_i\|^2+\sum_{j\in T}\langle w_i,w_j\rangle+\Big(\Sigma^{(k-1)}(\langle w_i,w_j\rangle_-)_{j\in[n]\setminus\{i\}}
-\|w_i\|^2\Big)\\
&\geq0,
\end{align*}
and so $\operatorname{ReLU}( W^\top Wx + b )_i>0$.
Since every $x\in\{0,1\}^n$ with $\|x\|_0\leq k$ has the property that $\operatorname{ReLU}( W^\top Wx + b )$ has same support as $x$, we conclude that $k\leq k(W)$.
\end{proof}

Note that 
\[
b_i
=-\Sigma^{(k)}(\langle w_i,w_j\rangle_+)_{j\in[n]\setminus\{i\}}
\]
results in a bias vector $b$ that has all non-positive entries.  
Furthermore, if each row of $W^\top W$ has similar structure, then each entry is approximately equal.

Next, we bound $k(W)$ using a classically studied concept: diagonal dominance.
Recall that an $n \times n$ matrix $A$ is \emph{strictly diagonally dominant} if for every $i \in [n]$,
\[
\abs{A_{i,i}} > \sum_{j \in [n] \backslash \{i\}} \abs{A_{i,j}}.
\]
This suggests a definition.

\begin{definition}
Given $W \in \bR^{d\times n}$, let $k'(W)$ denote the largest $k$ such that every principal $k \times k$ submatrix of $W^\top W$ is strictly diagonally dominant.
\end{definition}

Equivalently, $k'(W)$ is the largest integer $k$ such that
\[
\Sigma^{(k-1)}(\absip{w_i}{w_j})_{j \in [n]\setminus\{i\}}
<\|w_i\|^2
\qquad
\forall\, i\in[n].
\]
In the case where $\|w_i\|=1$ for every $i\in[n]$, one can relate $k'(W)$ to Tropp's Babel function~\cite{tropp2004greed}:
\[
\mu_1^{(k)}(W)
:=\max_{i\in[n]}\max_{\substack{S\subseteq[n]\setminus\{i\}\\|S|\leq k}}
\sum_{j\in S}|\langle w_i,w_j\rangle|.
\]
Specifically, $k'(W)$ is the largest integer $k$ for which $\mu_1^{(k-1)}(W)<1$.

Notably, $k'(W)$ is a much more intuitive parameter than $k(W)$, and conveniently, these are always within a factor of $2$ of each other:

\begin{prop}\label{prop:kk'}
Given $W \in \bR^{d\times n}$,
\[
\bigg\lfloor \frac{k'(W)}{2} \bigg\rfloor \leq k(W) \leq k'(W).
\]
\end{prop}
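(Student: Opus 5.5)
Both inequalities will follow from Lemma~\ref{lem.char}, which reduces $k\le k(W)$ to the row-wise condition
\[
\Sigma^{(k)}(\langle w_i,w_j\rangle_+)_{j\ne i}+\Sigma^{(k-1)}(\langle w_i,w_j\rangle_-)_{j\ne i}<\|w_i\|^2,
\]
together with the reformulation of $k'(W)$ as the largest $k$ with $\Sigma^{(k-1)}(|\langle w_i,w_j\rangle|)_{j\ne i}<\|w_i\|^2$ for all $i$. The proof is then a comparison of top-$m$ sums. Two elementary facts will be used repeatedly. First, $\Sigma^{(m)}$ of a nonnegative vector is monotone in $m$ and monotone under entrywise domination. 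Second, if $u,v\ge 0$ have disjoint supports, then $\Sigma^{(a)}u+\Sigma^{(b)}v\le \Sigma^{(a+b)}(u+v)$, since the chosen $a$ entries of $u$ and $b$ entries of $v$ sit at distinct indices of $u+v$. Here $u=(\langle w_i,w_j\rangle_+)_j$ and $v=(\langle w_i,w_j\rangle_-)_j$ have disjoint supports and $u+v=(|\langle w_i,w_j\rangle|)_j$.

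\emph{Upper bound} $k(W)\le k'(W)$. Set $k=k(W)$; we may assume $k\ge 1$. Fix $i$. In the condition of Lemma~\ref{lem.char}, bound the first term below by $\Sigma^{(k-1)}u$ using monotonicity in $m$. Then
\[
\Sigma^{(k-1)}u+\Sigma^{(k-1)}v\ \ge\ \Sigma^{(k-1)}(u+v),
\]
because the top $k-1$ entries of $u+v$ split into those lying in the support of $u$ and those lying in the support of $v$, and each part has at most $k-1$ entries. This gives $\Sigma^{(k-1)}(|\langle w_i,w_j\rangle|)_{j\ne i}<\|w_i\|^2$ for every $i$, so $k\le k'(W)$.

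\emph{Lower bound} $\lfloor k'/2\rfloor\le k(W)$. Let $k=\lfloor k'(W)/2\rfloor$; the case $k=0$ is trivial. By disjointness, the left side of Lemma~\ref{lem.char} is at most $\Sigma^{(2k-1)}(u+v)$. Since $2k-1\le k'(W)-1$, monotonicity bounds this by $\Sigma^{(k'-1)}(|\langle w_i,w_j\rangle|)_{j\ne i}<\|w_i\|^2$, and Lemma~\ref{lem.char} then gives $k\le k(W)$.

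The main obstacle is bookkeeping. One must take care with the edge cases $k=0$ and with the convention that $\Sigma^{(m)}$ with $m$ exceeding $n-1$ sums all the entries, which is harmless since the vectors are nonnegative. One must also make sure the monotonicity used in Lemma~\ref{lem.char} is stated for "$\le k$", which holds because the characterization is downward closed.
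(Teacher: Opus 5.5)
Your proof is correct and follows essentially the same route as the paper: both bounds come from Lemma~\ref{lem.char}. The upper bound uses the comparison $\Sigma^{(k-1)}(|\langle w_i,w_j\rangle|)\le\Sigma^{(k)}(\langle w_i,w_j\rangle_+)+\Sigma^{(k-1)}(\langle w_i,w_j\rangle_-)$, and the lower bound passes through $\Sigma^{(2\lfloor k'(W)/2\rfloor-1)}(|\langle w_i,w_j\rangle|)$. The only difference is that you spell out the splitting and merging of top-$m$ sums over the disjoint supports of the positive and negative parts, which the paper leaves implicit; when merging, restrict to the nonzero entries so that the chosen index sets really are disjoint.
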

\begin{proof}
We first prove that $k(W) \leq k'(W)$.  Regardless of the sign pattern of the entries of $W^\top W$, it follows from Lemma~\ref{lem.char} that for all $i \in [n]$,
\[
\Sigma^{(k(W)-1)}(\absip{w_i}{w_j})_{j \in [n]\setminus\{i\}} \leq 
\Sigma^{(k(W))}(\ip{w_i}{w_j}_+)_{j\in[n]\setminus\{i\}}
+\Sigma^{(k(W)-1)}(\ip{w_i}{w_j}_-)_{j\in[n]\setminus\{i\}}
<\norm{w_i}^2.
\]
Thus, $W^\top W$ is strictly diagonally dominant for all $k(W) \times k(W)$ principal submatrices.

We now show that $\big\lfloor \frac{k'(W)}{2} \big\rfloor \leq k(W)$.  
For every $i \in [n]$, the definition of $k'(W)$ gives
\begin{align*}
\norm{w_i}^2 &> \Sigma^{(k'(W)-1)}(\absip{w_i}{w_j})_{j \in [n]\setminus\{i\}} \\[1pt]
&\geq \Sigma^{(2\lfloor k'(W)/2 \rfloor-1)}(\absip{w_i}{w_j})_{j \in [n]\setminus\{i\}} \\[1pt]
&\geq \Sigma^{(\lfloor k'(W)/2 \rfloor)}(\ip{w_i}{w_j}_{+})_{j \in [n]\setminus\{i\}} + \Sigma^{(\lfloor k'(W)/2 \rfloor-1)}(\ip{w_i}{w_j}_{-})_{j \in [n]\setminus\{i\}}.
\end{align*}
So, we now apply Lemma~\ref{lem.char} to conclude that $\big\lfloor \frac{k'(W)}{2} \big\rfloor \leq k(W)$.
\end{proof}

Both the upper and lower bounds in Proposition~\ref{prop:kk'} are tight.  The tightness of the upper bound follows in Lemma~\ref{lem:upptight}. The tightness of the lower bound appears in the following section in Theorem~\ref{thm:lowersat}.  In both cases, ETFs can serve as the $W$, where the upper bound is saturated by ETFs with $n\leq d+1$ and the lower bound with ETFs with $n > d+1$.

\begin{lem}\label{lem:upptight}
Let $W$ be such that $\ip{w_i}{w_j} \leq 0$ for all $i \neq j$.  Then $k(W)=k'(W)$.
\end{lem}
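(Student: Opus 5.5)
By Proposition~\ref{prop:kk'} we already have $k(W)\leq k'(W)$, so it remains to prove $k'(W)\leq k(W)$. For that, we will verify the criterion of Lemma~\ref{lem.char} at $k:=k'(W)$.

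The sign hypothesis is what makes this work. Since $\ip{w_i}{w_j}\leq 0$ for all $i\neq j$, every off-diagonal positive part vanishes, $\ip{w_i}{w_j}_+=0$. Consequently, for each $i\in[n]$,
\[
\Sigma^{(k)}(\ip{w_i}{w_j}_+)_{j\in[n]\setminus\{i\}}=0 .
\]
Likewise, every negative part equals the absolute value, $\ip{w_i}{w_j}_-=\absip{w_i}{w_j}$, so
\[
\Sigma^{(k-1)}(\ip{w_i}{w_j}_-)_{j\in[n]\setminus\{i\}}=\Sigma^{(k-1)}(\absip{w_i}{w_j})_{j\in[n]\setminus\{i\}} .
\]
Adding these, the left-hand side of the inequality in Lemma~\ref{lem.char} reduces to $\Sigma^{(k-1)}(\absip{w_i}{w_j})_{j\in[n]\setminus\{i\}}$. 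This is exactly the quantity in the equivalent formulation of $k'(W)$.

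By the definition of $k'(W)$ (every principal $k'(W)\times k'(W)$ submatrix of $W^\top W$ is strictly diagonally dominant), this quantity is $<\norm{w_i}^2$ for every $i$ when $k=k'(W)$. Lemma~\ref{lem.char} then gives $k'(W)\leq k(W)$, and together with Proposition~\ref{prop:kk'} we conclude $k(W)=k'(W)$.

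The one point needing care is the passage between the strict diagonal dominance definition of $k'(W)$ and the $\Sigma^{(k-1)}$ formulation. For a fixed row $i$, the worst principal $k\times k$ submatrix containing index $i$ takes the $k-1$ largest values of $\absip{w_i}{w_j}$; the paper already records this equivalence. Degenerate cases, such as $k'(W)=0$ or $k-1$ exceeding $n-1$, are trivial or handled by the usual conventions for $\Sigma^{(\cdot)}$. Beyond this, I expect no real obstacle.
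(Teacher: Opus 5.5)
Your proposal is correct and matches the paper's proof: under the sign hypothesis the positive parts vanish and the negative parts equal the absolute values, so the criterion of Lemma~\ref{lem.char} coincides with the $\Sigma^{(k-1)}$ formulation of $k'(W)$. The only cosmetic difference is that the paper observes this identity for every $k\le n-1$ and reads off both inequalities at once, whereas you take $k(W)\le k'(W)$ from Proposition~\ref{prop:kk'} and check the criterion only at $k=k'(W)$.
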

\begin{proof}
    Since $\ip{w_i}{w_j} \leq 0$, for any $k \leq n-1$ and $i \in [n]$, 
    \begin{align*}
\Sigma^{(k-1)}(\absip{w_i}{w_j})_{j \in [n]\setminus\{i\}}&=\Sigma^{(k-1)}(\ip{w_i}{w_j}_-)_{j\in[n]\setminus\{i\}}\\
&=\Sigma^{(k)}(\ip{w_i}{w_j}_+)_{j\in[n]\setminus\{i\}}+\Sigma^{(k-1)}(\ip{w_i}{w_j}_-)_{j\in[n]\setminus\{i\}}.
\end{align*}
Thus, it follows from Lemma~\ref{lem.char} that $k(W)=k'(W)$.
\end{proof}

In the following two sections, we apply the above ideas to prove different regimes when superposition holds. 
What follows are our two main results in this vein.

\begin{thm}\label{thm:maingaussian}
Suppose $n\geq d\geq100$ and $W\in\mathbb{R}^{d\times n}$ has i.i.d.\ $N(0,1)$ entries.
Then
\[
\left\lfloor\frac{1}{2}\sqrt{\frac{d}{\log(40n^2)}}\right\rfloor
\,\leq\, k'(W)
\,\leq\, \left\lceil \frac{3}{2}\sqrt{5d}\right\rceil
\]
with probability at least $90\%$.
\end{thm}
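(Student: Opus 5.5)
We bound $k'(W)$ through its equivalent description: $k'(W)$ is the largest $k$ with $\Sigma^{(k-1)}(|\langle w_i,w_j\rangle|)_{j\neq i}<\|w_i\|^2$ for every $i$. The lower bound comes from a crude estimate $(k-1)\max_{j\neq i}|\langle w_i,w_j\rangle| < \min_i\|w_i\|^2$, with both sides controlled by concentration. The upper bound comes from one fixed row $i=1$, where we show that the $k-1$ largest values $|\langle w_1,w_j\rangle|$ already sum to more than $\|w_1\|^2$. Each event will fail with probability at most $5\%$, and a union bound then gives $90\%$.

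\emph{Lower bound.} First, $\|w_i\|^2\sim\chi^2_d$. The Laurent--Massart inequality gives $\|w_i\|^2\ge d-2\sqrt{dt}$ with probability $\ge 1-e^{-t}$. Taking $t=\log(40n)$ and union bounding over $i$ shows $\min_i\|w_i\|^2\ge d/2$ (say) with failure probability $\le 1/40$; this uses $d\ge100$ and presumably needs $d\gtrsim\log n$, and otherwise the claimed lower bound is $0$ and trivial. Second, for $i\neq j$ we condition on $w_i$, so that $\langle w_i,w_j\rangle\sim N(0,\|w_i\|^2)$. Then $|\langle w_i,w_j\rangle|\le \|w_i\|\sqrt{2\log(40n^2)}$ fails with probability $\le 1/(40n^2)$, and a union bound over pairs costs $1/40$. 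A cleaner route is to union bound only over the event $\|w_i\|^2\le 2d$, which costs another small probability. Combining these, $\max_{i\ne j}|\langle w_i,w_j\rangle|\lesssim\sqrt{d\log(40n^2)}$. Hence $(k-1)\mu < d/2$ holds whenever $k\le\frac12\sqrt{d/\log(40n^2)}$, once the constants are arranged. This gives $k'(W)\ge\lfloor\frac12\sqrt{d/\log(40n^2)}\rfloor$.

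\emph{Upper bound.} Set $k_0=\lceil\frac32\sqrt{5d}\rceil$. It suffices to show that, for $i=1$, the sum of the $k_0$ largest $|\langle w_1,w_j\rangle|$ over $j\ge2$ is at least $\|w_1\|^2$. Then the $(k_0+1)\times(k_0+1)$ principal submatrix on those indices is not strictly diagonally dominant, so $k'(W)\le k_0$. If $n-1<k_0$, the bound $k'(W)\le n\le k_0$ holds trivially. Otherwise we take any fixed $k_0$ indices $j=2,\dots,k_0+1$, since the top $k_0$ sum dominates any $k_0$ of them. Conditioned on $w_1$, the variables $g_j=\langle w_1,w_j\rangle/\|w_1\|$ are i.i.d.\ $N(0,1)$, so we need
\[
\|w_1\|\sum_{j=2}^{k_0+1}|g_j|\ \ge\ \|w_1\|^2,\quad\text{i.e.}\quad \sum_{j}|g_j|\ge\|w_1\|.
\]
By Chebyshev, $\|w_1\|^2\le d+\sqrt{2d}\cdot\sqrt{40}$ with failure probability $\le1/20$, so $\|w_1\|\le\sqrt{5d}/\,$(something) fails rarely; note $\|w_1\|^2\le 5d/4$-type bounds hold easily for $d\ge100$. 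Next, $\sum|g_j|$ has mean $k_0\sqrt{2/\pi}\approx0.8k_0$ and variance $k_0(1-2/\pi)$. Chebyshev or a one-sided Gaussian Lipschitz bound then gives $\sum|g_j|\ge\frac23 k_0\ge\sqrt{5d}\ge\|w_1\|$ with failure probability $\le 1/20$. This also explains the constant $\frac32\sqrt{5d}$: we need $\frac23k_0\ge\sqrt{5d}$, and $\|w_1\|\le\sqrt{5d}$ holds with huge margin.

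The main obstacle is bookkeeping the explicit constants: choosing the tail thresholds so that the four failure probabilities add to at most $10\%$, and checking that $d\ge100$ makes the $\chi^2$ deviations small enough, in particular that $\min_i\|w_i\|^2$ stays comparable to $d$ uniformly over $n$ columns. I would handle that step first with Laurent--Massart, and separately treat the regime where $\frac12\sqrt{d/\log(40n^2)}<1$, in which the lower bound is vacuous.
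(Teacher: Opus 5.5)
The gap is in your upper bound. Fixing the $k_0$ indices $j=2,\dots,k_0+1$ is legitimate, since $\Sigma^{(k_0)}$ dominates any sum of $k_0$ entries. The quantitative step you rely on, however, is false for moderate $d$. At $d=100$ one has $k_0=34$, and $\sum_j\abs{g_j}$ has mean $34\sqrt{2/\pi}\approx27.1$ and standard deviation $\sqrt{34(1-2/\pi)}\approx3.5$. So the event $\sum_j\abs{g_j}<\tfrac23k_0\approx22.7$ has probability about $0.10$, and no inequality can bound it by $\tfrac1{20}$.

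The tools you name are far weaker still. Chebyshev gives about $21/k_0$, and the Gaussian Lipschitz bound gives about $e^{-0.0086k_0}$; both become useful only for $k_0\gtrsim350$, i.e.\ $d\gtrsim10^4$. Combined with the $5\%$ spent on the lower bound, the $90\%$ claim is therefore not established for moderate $d$.

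The culprit is how you split the slack. Since $\norm{w_1}\approx\sqrt d$, the bound $\norm{w_1}\le\sqrt{5d}$ wastes a factor $\sqrt5$ and pushes everything onto a moderate-deviation event for only $k_0$ half-normals. A repair within your route:
\begin{itemize}
\item $\norm{w_1}^2\le2d$ except with probability $e^{-d/8}$ (Laurent--Massart with $t=d/8$);
\item $\sqrt{2d}\le0.43k_0$;
\item a Chernoff bound using $\bE e^{-\lambda\abs{g}}=2e^{\lambda^2/2}\,\bP\{g>\lambda\}$ at $\lambda=\tfrac32$ gives $\bP\{\sum_j\abs{g_j}\le0.43k_0\}\le0.79^{k_0}<10^{-3}$.
\end{itemize}

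The paper sidesteps this by not fixing indices; it uses all $n-1\ge2k_0$ off-diagonal entries of one row. Let $m\approx0.674\ge\tfrac23$ be the half-normal median. If $\Sigma^{(k_0)}(\abs{Z_j})_{j\in[n-1]}\le k_0m$, then at least $n-k_0$ of the $\abs{Z_j}$ lie below $m$. That is a $\operatorname{Binomial}(n-1,\tfrac12)$ tail event, which Hoeffding bounds by $e^{-d/20}$ when $n\ge d\ge100$. Separately, $\bP\{\norm{w_n}\ge\sqrt{5d}\}\le e^{-d}$. So the $\tfrac23$ in the constant is a lower bound on the \emph{median}, which the $k_0$ largest of many samples all exceed, not a fraction of the mean of $k_0$ fixed samples. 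The paper's route exploits $n\ge d$ to get a one-line estimate. Yours is independent of $n$, but it needs a genuine exponential lower-tail bound for a sum of $k_0$ half-normals.

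Your lower bound is essentially the paper's argument: normalize by $\norm{w_i}$, then union-bound the Gaussian inner products and the $\chi_d$ lower tail. The difference is that you use a fixed threshold $\norm{w_i}^2\ge d/2$ in place of the paper's $(1-\tfrac1k)d$. It yields the stated constant only if you argue row by row. With $L=\log(40n^2)$ you need $(k-1)\sqrt{2L}<\norm{w_i}$, which follows from $\norm{w_i}^2\ge d/2$ and $k-1<\tfrac12\sqrt{d/L}$, at total failure cost $\tfrac1{40}+\tfrac1{40}$.

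Your ``cleaner'' global route loses a factor of $2$. It uses $\norm{w_i}^2\le2d$, hence $\max_{i\neq j}\absip{w_i}{w_j}\le2\sqrt{dL}$, then asks for $(k-1)\cdot2\sqrt{dL}<d/2$. That only gives $k-1<\tfrac14\sqrt{d/L}$, so no arrangement of those constants produces the factor $\tfrac12$.

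Finally, small $d$ is disposed of by $k'(W)\ge1$ almost surely, not by the bound being $0$. So only $d\ge16\log(40n^2)$ matters. There $d\ge16\log(40n)$, and Laurent--Massart with $t=\log(40n)$ indeed gives $\norm{w_i}^2\ge d/2$ for all $i$.
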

\begin{thm}\label{thm:mainETF}
Let $W$ be a $(d,n)$-ETF with $n> d+1$.  Then $k(W)=\big\lfloor \frac{k'(W)}{2} \big\rfloor$ and $k'(W)=\big\lceil\sqrt{\frac{d(n-1)}{n-d}}\big\rceil$.
\end{thm}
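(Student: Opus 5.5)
The plan is to reduce both claims to the sign structure of the Gram matrix. Write $G=W^\top W$. Since $W$ is a $(d,n)$-ETF, $G_{ii}=1$ and $G_{ij}=\pm\mu$ for $i\neq j$, where $\mu=\sqrt{\frac{n-d}{d(n-1)}}$ is the Welch--Rankin value.

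\textbf{Computing $k'(W)$.} Every off-diagonal entry has modulus $\mu$, so $\Sigma^{(k-1)}(\absip{w_i}{w_j})_{j\neq i}=(k-1)\mu$ whenever $k-1\leq n-1$. By the reformulation of $k'(W)$ after its definition, $k'(W)$ is the largest integer $k$ with $(k-1)\mu<1$, i.e.\ $k<1+1/\mu$. This largest integer is $\lceil 1/\mu\rceil$, whether or not $1/\mu$ is an integer, and $\lceil 1/\mu\rceil=\big\lceil\sqrt{d(n-1)/(n-d)}\big\rceil$. A short check shows $1/\mu\leq n-1$, so we never run out of entries.

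\textbf{Lower bound on $k(W)$.} Proposition~\ref{prop:kk'} already gives $k(W)\geq\lfloor k'(W)/2\rfloor$.

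\textbf{Upper bound on $k(W)$.} Set $k:=\lfloor k'(W)/2\rfloor+1$; it suffices to show $k\not\leq k(W)$. For row $i$, let $p_i$ and $m_i$ be the numbers of positive and negative off-diagonal entries, so $p_i+m_i=n-1$. The quantity in Lemma~\ref{lem.char} equals
\[
\mu\big(\min\{k,p_i\}+\min\{k-1,m_i\}\big).
\]
Suppose some row has $p_i\geq k$ and $m_i\geq k-1$. Then this quantity is $(2k-1)\mu$. Since $2k-1=2\lfloor k'/2\rfloor+1\geq k'$ and $k'\mu=\lceil1/\mu\rceil\mu\geq1$, it is at least $1=\|w_i\|^2$. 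So the criterion of Lemma~\ref{lem.char} fails at $i$, giving $k(W)<k$ and hence $k(W)=\lfloor k'(W)/2\rfloor$.

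\textbf{The main obstacle: sign balance.} Everything therefore reduces to a sign-balance statement. For $n>d+1$ we need some row $i$ with both $p_i$ and $m_i$ at least about $\frac{1}{2}\lceil1/\mu\rceil+1$. This is exactly the content of Proposition~\ref{prop:ETFevensp} and Remark~\ref{rem:numpm}: at most one row is sign-unbalanced. Since $n\geq 3$, this leaves a good row.

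To prove the balance I would proceed as follows.
\begin{itemize}
\item Write $G=I+\mu S$ with $S$ a symmetric $\pm1$ signature matrix, and use tightness, $G^2=\frac{n}{d}G$. This yields the quadratic identity $S^2=(n-1)I+\frac{n-2d}{d\mu}S$, i.e.\ $S$ has exactly two eigenvalues.
\item Compare row $i$ with row $j$ via entry $(i,j)$ of $S^2$. This counts sign agreements minus disagreements between the two rows, and it is determined by $S_{ij}$.
\item Suppose row $i$ is heavily skewed, say $m_i<k-1$. Then every other row must agree with row $i$ in most coordinates, so rows $j\neq i$ are themselves skewed.
\item Summing entries of $S^2$, or testing positive semidefiniteness of $G$ and of $\frac{n}{d}I-G$ (the Naimark complement, which flips all signs and so handles the case $p_i<k$) on vectors like $c\,e_i+\mathbbm{1}_{N}$ with $N$ the negative neighbors, should then force a contradiction unless $n=d+1$. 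In the simplex case all signs can indeed be negative.
\end{itemize}
I expect the hardest part to be getting the quantitative threshold, namely $\min(p_i,m_i)\gtrsim \frac{1}{2\mu}$ rather than merely nonzero. I would first try to extract it from the eigenvalue identity applied to $S\mathbbm{1}$ restricted to the positive and negative neighborhoods of $i$.
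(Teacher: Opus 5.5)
Your reduction matches the paper's: Lemma~\ref{lem:k'ETF} gives $k'(W)$, Proposition~\ref{prop:kk'} gives $k(W)\geq\lfloor k'(W)/2\rfloor$, and Lemma~\ref{lem.char} fails at any row with $p_i\geq k$ and $m_i\geq k-1$, where $k=\lfloor k'(W)/2\rfloor+1$.

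The gap is the sentence claiming that the needed balance ``is exactly the content of Proposition~\ref{prop:ETFevensp}.'' That proposition only guarantees $\min\{p_i,m_i\}>C:=\frac12\bigl(\frac n2-2-\frac{|n/d-2|}{2\mu}\bigr)$ for all but one row. You still have to show this threshold is large enough, essentially $\lfloor C\rfloor\geq\lfloor k'(W)/2\rfloor$, and you never do. This comparison is the whole substance of the paper's proof of Theorem~\ref{thm:lowersat}, and it is not automatic.
\begin{itemize}
\item For $n\neq 2d$, the paper uses the Lemmens--Seidel theorem: $a=1/\mu$ and the Naimark complement's reciprocal coherence $b$ are odd integers $\geq3$ with $ab=n-1$. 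Hence $\lfloor k'/2\rfloor=(a-1)/2$ exactly, and the requirement becomes $ab-1\geq 2a+|b-a|$. Integrality matters here. Using only $\lfloor k'/2\rfloor<(1/\mu+1)/2$ fails, e.g.\ for the $(21,28)$-ETF, where $C=4.5$ but $(1/\mu+1)/2=5$.
\item For $n=2d$ one needs $d-2\geq\lceil\sqrt{2d-1}\rceil$, which holds for $d\geq5$, with small cases checked separately.
\item For the $(3,6)$-ETF the comparison is simply false: $C=\frac12$ while $\lfloor k'/2\rfloor=1$. Proposition~\ref{prop:ETFevensp} is then consistent with every row having exactly one positive off-diagonal entry. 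In that case no row meets your condition $p_i\geq2$, $m_i\geq1$. Ruling this out requires the separate switching argument of Corollary~\ref{cor:numpm2}.
\end{itemize}

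Your sketched proof of balance would also not work as planned. Replacing $W$ by $WD$, with $D$ a diagonal $\pm1$ matrix, preserves the ETF property and can make any single row of the Gram matrix entirely positive off the diagonal. So no contradiction can come from one skewed row, including semidefiniteness tests built around row $i$ alone. Your own identity $S^2=(n-1)I+\frac{n-2d}{d\mu}S$ also points the other way. Off $\{i,j\}$, it says rows $i$ and $j$ have agreements minus disagreements equal to $\pm|n/d-2|/\mu=\pm|b-a|$, which is well below $n-2=ab-1$. So a skewed row forces the other rows to be nearly \emph{balanced}, not skewed. The contradiction must come from assuming two skewed rows, as in Proposition~\ref{prop:ETFevensp}.
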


\section{Proof of Theorem~\ref{thm:maingaussian}}\label{sec:randW}
We begin our analysis of the worst--case support model by assuming that $W$ is a Gaussian matrix.  In this case, we are able to prove lower (Proposition~\ref{prop:randkW1}) and upper (Proposition~\ref{prop:randkW2}) bounds on maximum sparsity levels that yield support recovery in superposition with high probability.

\begin{prop}\label{prop:randkW1}
Suppose $W\in\mathbb{R}^{d\times n}$ has i.i.d.\ $N(0,1)$ entries.
Then for any $\delta > 0$,
\[
k'(W)\geq \left\lfloor\frac{1}{2}\sqrt{\frac{d}{\log(2n^2/\delta)}}\right\rfloor
\]
with probability at least $1-\delta$.
\end{prop}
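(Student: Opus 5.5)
We reduce the claim to a uniform bound on the normalized inner products and then use the equivalent form of $k'(W)$ via the Babel-type sum. Set $k:=\lfloor \frac12\sqrt{d/\log(2n^2/\delta)}\rfloor$; we may assume $k\geq 1$. By the reformulation following the definition of $k'(W)$, it suffices to show that with probability at least $1-\delta$,
\[
\Sigma^{(k-1)}(\absip{w_i}{w_j})_{j\in[n]\setminus\{i\}}<\norm{w_i}^2\qquad\forall\, i\in[n].
\]
The left side is at most $(k-1)\max_{j\neq i}\absip{w_i}{w_j}$. It is therefore enough to have, for each ordered pair $i\neq j$,
\[
\absip{w_i}{w_j}<\frac{\norm{w_i}^2}{k-1}.
\]

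To avoid separately controlling norm concentration, we condition on $w_i$. Given $w_i$, the quantity $\ip{w_i}{w_j}$ for $j\neq i$ is distributed as $N(0,\norm{w_i}^2)$, since $w_j$ is independent of $w_i$. Hence
\[
\bP\Big\{\absip{w_i}{w_j}\geq t\norm{w_i}\,\Big|\, w_i\Big\}\leq 2e^{-t^2/2}.
\]
We choose $t:=\sqrt{2\log(2n^2/\delta)}$, which makes this at most $\delta/n^2$. A union bound over the fewer than $n^2$ ordered pairs then shows that, with probability at least $1-\delta$,
\[
\absip{w_i}{w_j}<t\norm{w_i}\quad\text{for all } i\neq j.
\]

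On this event we need $t\norm{w_i}\,(k-1)\leq\norm{w_i}^2$, that is, $(k-1)\,t\leq\norm{w_i}$ for every $i$. This is the main obstacle, because we need a lower bound on $\norm{w_i}$ without spending extra probability.

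The first thing to try is to exploit the slack in the constant $\frac12$. We have $(k-1)\,t<\frac12\sqrt{d/\log(2n^2/\delta)}\cdot\sqrt{2\log(2n^2/\delta)}=\sqrt{d/2}$, so it suffices to have $\norm{w_i}\geq\sqrt{d/2}$, i.e.\ $\norm{w_i}^2\geq d/2$. Controlling this would normally require a chi-square lower tail bound, $\bP\{\norm{w_i}^2\leq d/2\}\leq e^{-d/16}$, folded into the union bound with a slightly adjusted budget.

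To stay within exactly $\delta$, we instead argue as follows. If $\norm{w_i}^2<d/2$, then $\norm{w_i}^2<4\log(2n^2/\delta)\,k^2$, so the deficiency has to be absorbed. The cleanest fix is to note that when $k\geq 2$ we have $d\geq 16\log(2n^2/\delta)$, and then $e^{-d/16}\leq\delta/(2n^2)$. So we split the budget: use $t=\sqrt{2\log(4n^2/\delta)}$ for the pairs, and spend $\delta/2$ on the norms. The constant slack, $\sqrt{d/2}$ versus roughly $\sqrt d$, should cover the change from $2n^2$ to $4n^2$ inside the logarithm; checking this arithmetic is the delicate step.

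The case $k=1$ is trivial, because every $1\times1$ principal submatrix is strictly diagonally dominant as long as $w_i\neq0$, which holds almost surely.
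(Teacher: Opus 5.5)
Your reduction is the paper's: normalize by $\norm{w_i}$, bound the sum of the $k-1$ largest $\absip{w_i}{w_j}$ by $(k-1)$ times the maximum, use a Gaussian tail with a union bound over pairs, and bound $\norm{w_i}$ from below with the Laurent--Massart chi-square estimate. Conditioning on $w_i$ is equivalent to the paper's use of $\ip{w_i/\norm{w_i}}{w_j}\sim N(0,1)$.

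The gap is that you never finish the probability accounting. The proposal ends with a budget split and the assertion that the constant slack ``should cover'' the change from $2n^2$ to $4n^2$, and you explicitly leave that check undone. It does not follow from what you wrote. With $t=\sqrt{2\log(4n^2/\delta)}$, the constant-slack bound on the required threshold becomes $(k-1)^2t^2<\frac{d}{2}\left(1+\frac{\log 2}{\log(2n^2/\delta)}\right)$. This exceeds $d/2$, so $e^{-d/16}$ no longer applies. The only input you have, $d\geq 16\log(2n^2/\delta)$ (from $k\geq 2$), then does not make $n$ times the resulting Laurent--Massart tail at most $\delta/2$; for $n=2$ it misses by a constant factor. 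To rescue the split you would have to use $k$ itself. For instance, handle $k=2$, where $(k-1)t$ is far below $\sqrt{d/2}$, separately from $k\geq 3$, where $d\geq 36\log(2n^2/\delta)$.

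No split is needed. Your first choice $t=\sqrt{2\log(2n^2/\delta)}$ already closes the argument once you count pairs exactly instead of bounding their number by $n^2$:
\begin{itemize}
\item The $n(n-1)$ ordered pairs fail with total probability at most $n(n-1)\cdot\delta/n^2=\delta-\delta/n$.
\item As you showed, for $k\geq 2$ each event $\{\norm{w_i}^2<d/2\}$ has probability at most $e^{-d/16}\leq\delta/(2n^2)$, so the $n$ norm events cost at most $\delta/(2n)$.
\item The total failure probability is therefore at most $\delta-\delta/(2n)<\delta$.
\item Off this event, for every $i$, $\Sigma^{(k-1)}(\absip{w_i}{w_j})_{j\in[n]\setminus\{i\}}<(k-1)t\norm{w_i}\leq\sqrt{d/2}\,\norm{w_i}\leq\norm{w_i}^2$.
\end{itemize}

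With this repair your proof is complete and matches the paper's up to parametrization. The paper uses the $k$-dependent norm threshold $\sqrt{(1-1/k)d}$, which puts the pair tails and the norm tail on the common scale $e^{-d/(4k^2)}$. The total $n(2(n-1)+1)e^{-d/(4k^2)}\leq 2n^2e^{-d/(4k^2)}$ then follows in one line.

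Your aside that $\norm{w_i}^2<d/2$ forces $\norm{w_i}^2<4k^2\log(2n^2/\delta)$ is false in general, for example when $k=2$ and $d$ is near $36\log(2n^2/\delta)$. It plays no role and can be deleted.
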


\begin{proof}
We will use the standard fact that, for each $i,j\in[n]$ with $i\neq j$, it holds that
\[
\ip{\tfrac{w_i}{\|w_i\|}}{w_j}
\sim N(0,1),
\qquad
\|w_i\|
\sim \chi(d).
\]
Indeed, the distribution of $w_j$ is rotation invariant, and its first coordinate (say) has distribution $N(0,1)$.
Meanwhile, $\chi(d)$ is \textit{defined} to be the Euclidean norm of a standard Gaussian vector in $\mathbb{R}^d$.
This motivates the following choice of union bound, where $a>0$ is a parameter to be selected later:
\begin{align*}
\bP\left\{ k'(W) < k \right\} &= \bP \left\{ \exists i \in [n] \enskip \textrm{s.t.} \enskip \Sigma^{(k-1)}\left(\absip{w_i}{w_j}\right)_{j \in [n]\setminus\{i\}} \geq \norm{w_i}^2 \right\}\\
&\leq \sum_{i \in [n]} \bP \left\{ \Sigma^{(k-1)}\left(\absip{\tfrac{w_i}{\norm{w_i}}}{w_j}\right)_{j \in [n]\setminus\{i\}} \geq \norm{w_i}\right\}\\
&\leq \sum_{i \in [n]}\left(
\bP\left\{\Sigma^{(k-1)}\left(\absip{\tfrac{w_i}{\norm{w_i}}}{w_j}\right)_{j \in [n]\setminus\{i\}} \geq a\right\}+\bP\left\{\|w_i\|\leq a\right\}\right)\\
&=n\cdot\bigg(\bP\Big\{
\Sigma^{(k-1)}(|Z_j|)_{j\in[n-1]}
\geq a\Big\}+\bP\left\{X\leq a\right\}\bigg),
\end{align*}
where $Z_1,\ldots,Z_{n-1}\sim N(0,1)$ are independent and $X\sim\chi(d)$.
For the first term, we apply the union bound and the Chernoff bound:
\begin{align*}
\bP\Big\{
\Sigma^{(k-1)}(|Z_j|)_{j\in[n-1]}
\geq a\Big\}
\leq\bP\Big\{(k-1)\max_{j\in[n-1]}|Z_j|\geq a\Big\}
&\leq \sum_{j=1}^{n-1}\bP\Big\{|Z_j|\geq\tfrac{a}{k-1}\Big\}\\
&\leq (n-1)\cdot 2 e^{-a^2/(2(k-1)^2)}.
\end{align*}
For the second term, we apply Lemma~1 from~\cite{laurent2000adaptive}, which states that $\bP\{X^2-d\leq -2\sqrt{dt}\}\leq e^{-t}$ for each $t>0$.
Taking $t:=\varepsilon^2 d$, then
\[
\bP\bigg\{X\leq \sqrt{(1-2\varepsilon)d}\bigg\}
=\bP\Big\{X^2-d\leq -2\varepsilon d\Big\}\leq e^{-\varepsilon^2 d}.
\]
This suggests taking $a:=\sqrt{(1-2\varepsilon)d}$, in which case
\[
\bP\left\{ k'(W) < k \right\} 
\leq n\cdot\bigg((n-1)\cdot 2\exp\Big(-\tfrac{(1-2\varepsilon)d}{2(k-1)^2}\Big)+\exp(-\varepsilon^2 d)\bigg).
\]
Taking $\varepsilon:=\frac{1}{2k}$, then since $k\geq 2$ without loss of generality, we have $1-2\varepsilon=1-\frac{1}{k}\geq\frac{1}{2}$, and so
\[
\bP\left\{ k'(W) < k \right\} 
\leq n\cdot\Big((n-1)\cdot 2e^{-d/(4(k-1)^2)}+e^{-d/(4k^2)}\Big)
\leq 2n^2e^{-d/(4k^2)},
\]
which implies the result.
\end{proof}

\begin{prop}\label{prop:randkW2}
Suppose $n\geq d\geq 100$ and $W\in\mathbb{R}^{d\times n}$ has i.i.d.\ $N(0,1)$ entries.
Then
\[
k'(W)
\leq\left\lceil\frac{3}{2}\sqrt{5d}\right\rceil
\]
with probability at least $1-2e^{-d/20}$.
\end{prop}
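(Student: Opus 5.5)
The plan is to exhibit, with high probability, a single index $i$ (take $i=1$) that violates the diagonal-dominance criterion at level $K+1$, where $K:=\lceil\frac32\sqrt{5d}\rceil$. By the equivalent description of $k'(W)$ after its definition, $k'(W)\le K$ holds as soon as
\[
\Sigma^{(K)}\big(\absip{w_1}{w_j}\big)_{j\in[n]\setminus\{1\}}\geq \norm{w_1}^2 .
\]
As in the proof of Proposition~\ref{prop:randkW1}, I condition on $w_1$. Then $Z_j:=\ip{w_1/\norm{w_1}}{w_j}$, for $j\neq 1$, are i.i.d.\ $N(0,1)$ and independent of $\norm{w_1}\sim\chi(d)$. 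After dividing by $\norm{w_1}$, the goal becomes $\Sigma^{(K)}(|Z_j|)_{j\in[n-1]}\geq \norm{w_1}$. I will bound the two sides separately and take a union bound, aiming for two failure events each of probability at most $e^{-d/20}$.

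\textbf{Right-hand side.} I use the upper-tail half of Lemma~1 of~\cite{laurent2000adaptive}: $\bP\{X^2-d\geq 2\sqrt{dt}+2t\}\leq e^{-t}$. With $t=d/20$ this gives $\norm{w_1}^2\leq (1+2/\sqrt{20}+1/10)d\leq 1.55\,d$, so $\norm{w_1}\leq 1.25\sqrt d$ except with probability $e^{-d/20}$.

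\textbf{Left-hand side.} Working with only the top $K$ of the $|Z_j|$ would give concentration of order $e^{-c\sqrt d}$, which is too weak. Instead I use the averaging bound: the top $K$ entries of a nonnegative vector have sum at least $K$ times its mean. Hence
\[
\Sigma^{(K)}(|Z_j|)_{j\in[n-1]}\geq \frac{K}{n-1}\sum_{j=1}^{n-1}|Z_j|,
\]
which requires $K\leq n-1$; this holds since $K\le 34\le d-1\le n-1$ for $d\ge100$. The map $z\mapsto\sum_j|z_j|$ is $\sqrt{n-1}$-Lipschitz, and $\bE|Z_j|=\sqrt{2/\pi}$. Gaussian concentration therefore gives
\[
\frac{1}{n-1}\sum_j|Z_j|\geq \sqrt{2/\pi}-s
\quad\text{with probability at least } 1-e^{-(n-1)s^2/2}.
\]

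\textbf{Combining.} With $s=1/\sqrt{10}$ the average is at least $0.48$, so the left side is at least $0.48\cdot\frac32\sqrt5\,\sqrt d\approx1.61\sqrt d$, which exceeds $1.25\sqrt d$. The exponent is $(n-1)/20$ rather than $d/20$, so the main obstacle is purely this bookkeeping. There is ample slack in the constants: I would take $s$ slightly larger (any $s\le0.4$ still works), so that $(n-1)s^2/2\geq d/20$ whenever $n\geq d\geq100$. The union bound then gives total failure probability at most $2e^{-d/20}$.
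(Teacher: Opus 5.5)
Your proof is correct. It shares the paper's skeleton: reduce to a single row (the paper uses $i=n$), pass to i.i.d.\ $Z_j\sim N(0,1)$ and $\|w_i\|\sim\chi(d)$, split at a threshold, and control $\|w_i\|$ with the upper tail from Laurent--Massart.

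The genuine difference is how you lower-bound $\Sigma^{(K)}(|Z_j|)$, that is, how you escape the $e^{-c\sqrt d}$ rate you mention. The paper uses an order statistic. If $\Sigma^{(K)}(|Z_j|)\le Km$, with $m$ the half-normal median, then at least $n-K$ of the $n-1$ variables lie below $m$. This is a $\operatorname{Binomial}(n-1,\frac12)$ tail, which Hoeffding bounds. The route is fully elementary, but it needs $K-1\le\frac{n-1}{2}$, the estimate $m\geq\frac23$, and a somewhat fiddly final check that $(n-2K+1)^2/(2d(n-1))\geq\frac1{20}$. (The $\chi$ term, taken with $t=d$, is far from binding there.)

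You instead bound the top-$K$ sum below by $K$ times the empirical mean and apply Gaussian concentration to the $\sqrt{n-1}$-Lipschitz map $z\mapsto\sum_j|z_j|$. You then balance the two failure events at $e^{-d/20}$ each, which makes the constants easy to verify. The price is a heavier tool whose constant matters. Your bookkeeping needs the sharp form $\bP\{f(Z)\le\bE f(Z)-r\}\le e^{-r^2/(2L^2)}$ (Tsirelson--Ibragimov--Sudakov). With weaker versions found in some texts, such as the Maurey--Pisier constant $2/\pi^2$ in place of $\tfrac12$, your scheme would not leave enough room to reach the exponent $d/20$.

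One small slip: $K\le 34$ holds only at $d=100$, since in general $K\approx 3.35\sqrt d$. The inequality you actually need, $K\le d-1\le n-1$, is still true for all $d\geq100$.
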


\begin{proof}
Let $m$ denote the median of the standard half-normal distribution.
A union bound gives
\begin{align*}
\bP\{k'(W)> k\}
&=\bP\Big\{\forall i\in[n],\,\Sigma^{(k)}(|\langle w_i,w_j\rangle|)_{j\in[n]\setminus\{i\}}<\|w_i\|^2\Big\}\\
&\leq\bP\Big\{\Sigma^{(k)}(|\langle \tfrac{w_n}{\|w_n\|},w_j\rangle|)_{j\in[n-1]}<\|w_n\|\Big\}\\
&\leq\bP\Big\{\Sigma^{(k)}(|Z_j|)_{j\in[n-1]}\leq km\Big\}+\bP\Big\{X\geq km\Big\},
\end{align*}
where $Z_1,\ldots,Z_{n-1}\sim N(0,1)$ are independent and $X\sim \chi(d)$.
For the first term, note that the inequality $\Sigma^{(k)}(|Z_j|)_{j\in[n-1]}\leq km$ implies that the $k$th largest of $(|Z_j|)_{j\in[n-1]}$ is at most $m$, which in turn is equivalent to having at least $n-k$ of these $n-1$ random variables being at most $m$.
Since $m$ is the median of the continuous random variable $|Z_j|$, then denoting $B\sim\operatorname{Binomial}(n-1,\frac{1}{2})$, we have
\[
\bP\Big\{\Sigma^{(k)}(|Z_j|)_{j\in[n-1]}\leq km\Big\}
\leq \bP\{B\geq n-k\}
\leq\exp\bigg(-\frac{(n-2k+1)^2}{2(n-1)}\bigg),
\]
where the last step applies Hoeffding's inequality under the assumption that $k-1\leq\frac{n-1}{2}$.
For the second term, we again appeal to Lemma~1 from~\cite{laurent2000adaptive}, which states that
\[
\bP\{X^2-d\geq 2\sqrt{dt}+2t\}
\leq e^{-t}
\]
for each $t>0$.
Take $t:=d$ and assume $k\geq\frac{3}{2}\sqrt{5d}$.
Then since $m=\sqrt{2} \operatorname{erf}^{-1}(\frac{1}{2}) \geq \frac{2}{3}$ (where $\operatorname{erf}^{-1}$ is the inverse error function), we have
\[
\bP\Big\{X\geq km\Big\}
\leq\bP\bigg\{X^2-d\geq 2\sqrt{dt}+2t\bigg\}
\leq e^{-t}
=e^{-d}.
\]
Together, we have
\[
\bP\Big\{k'(W)> \Big\lceil \tfrac{3}{2}\sqrt{5d}\Big\rceil\Big\}
\leq \exp\bigg(-\frac{(n-2\lceil \tfrac{3}{2}\sqrt{5d}\rceil+1)^2}{2(n-1)}\bigg)+e^{-d}
\leq 2e^{-d/20},
\]
where the last step uses the hypothesis that $n \geq d \geq 100$, which one may verify implies
\[
\frac{(n-2\lceil \tfrac{3}{2}\sqrt{5d}\rceil+1)^2}{2d(n-1)}
\geq\frac{1}{20}.
\qedhere
\]
\end{proof}

We are now ready to prove Theorem~\ref{thm:maingaussian}.
\begin{proof}[Proof of Theorem~\ref{thm:maingaussian}]
  This follows from a union bound over Propositions~\ref{prop:randkW1} and~\ref{prop:randkW2}.
\end{proof}

\section{Proof of Theorem~\ref{thm:mainETF}}\label{sec:ETFs}
In this section, we focus on proving that ETFs provide support recovery in superposition in the worst-case support model. We already proved that (centered) tight frames with low coherence yield not only support recovery but also low sup norm error in Remark~\ref{rem.interpretation}. The main result of this section, Theorem~\ref{thm:mainETF}, will show that ETFs yield support recovery for less sparse vectors than Gaussian $W$ (Theorem~\ref{thm:maingaussian}).  Additionally, we prove a structural result about real ETFs (Proposition~\ref{prop:ETFevensp} and Remark~\ref{rem:numpm}).

We begin by noting that we can easily calculate $k'(W)$ when $W$ is an ETF.
\begin{lem}\label{lem:k'ETF}
Given an ETF $W$ with coherence $\mu>0$, then $k'(W) = \lceil \frac{1}{\mu} \rceil$.
\end{lem}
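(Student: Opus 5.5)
The plan is to use the reformulation of $k'(W)$ given right after its definition. That is, $k'(W)$ is the largest integer $k$ such that
\[
\Sigma^{(k-1)}(\absip{w_i}{w_j})_{j\in[n]\setminus\{i\}}<\norm{w_i}^2 \qquad \forall\, i\in[n].
\]
For an ETF the columns are unit norm, so $\norm{w_i}^2=1$. Also $\absip{w_i}{w_j}=\mu$ for every $i\neq j$. Hence each vector $(\absip{w_i}{w_j})_{j\neq i}$ is the constant vector with $n-1$ entries equal to $\mu$. For $k-1\le n-1$ this gives
\[
\Sigma^{(k-1)}(\absip{w_i}{w_j})_{j\neq i}=(k-1)\mu,
\]
and the quantity is the same for every $i$. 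The condition therefore reduces to the single scalar inequality $(k-1)\mu<1$, i.e.\ $k<1+\frac1\mu$.

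Next I compute the largest integer $k$ satisfying $k<1+\frac1\mu$. If $1/\mu$ is not an integer, the largest such $k$ is $\lfloor 1/\mu\rfloor+1=\lceil 1/\mu\rceil$. If $1/\mu$ is an integer, the inequality is strict, so $k=1/\mu$ is allowed and $k=1/\mu+1$ is not; the largest is again $\lceil 1/\mu\rceil$. Both cases give $k'(W)=\lceil 1/\mu\rceil$.

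The main obstacle is the range of $k$: the reformulation requires $k\le n$, since principal $k\times k$ submatrices must exist. I need $\lceil 1/\mu\rceil\le n$. The Welch--Rankin value gives
\[
\frac1{\mu^2}=\frac{d(n-1)}{n-d}.
\]
The claim $\frac1{\mu^2}\le n^2$ is then equivalent to $d(n-1)\le n^2(n-d)$. This holds whenever $n>d$, which is implied by $\mu>0$. Then $\lceil 1/\mu\rceil\le n$, so no truncation issue arises and the argument is complete.
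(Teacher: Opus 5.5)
Your proposal is correct and follows essentially the same route as the paper: constant off-diagonal moduli $\mu$ and unit norms reduce the condition to $(k-1)\mu<1$, whose largest integer solution is $\lceil 1/\mu\rceil$. Your additional check, via the Welch--Rankin equality, that $\lceil 1/\mu\rceil\le n$ (so the reformulation is used only for $k\le n$) fills a small range detail that the paper's proof leaves implicit.
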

\begin{proof}  
For each $i,k \in [n]$, we have $\norm{w_i}^2=1$ and
\[
\Sigma^{(k-1)}(\absip{w_i}{w_j})_{j \in [n]\setminus\{i\}} = (k-1) \mu.
\]
As such, $k'(W)$ is the largest integer $k$ for which $(k-1)\mu<1$, i.e., the largest integer strictly less than $\frac{1}{\mu}+1$, namely, $\lceil \frac{1}{\mu}\rceil$.
\end{proof}

If $W$ is an ETF with all non-positive inner products, then either $n=d$ (orthonormal basis) or $n=d+1$ (regular simplex ETF).  In what follows, we will show that all real ETFs with $n > d+1$ saturate the lower bound from Proposition~\ref{prop:kk'}, while proving certain stronger structural results about ETFs along the way. First, we need a proposition that shows that the number of positive and negative inner products in a real ETF with $n > d+1$ are fairly evenly distributed.  
\begin{prop}\label{prop:ETFevensp}
Consider a $(d,n)$-ETF $W$ with $n > d+1$ and coherence $\mu$ and set 
\[
C := \frac{1}{2} \left( \frac{n}{2} - 2 - \frac{\abs{n/d-2}}{2\mu}\right).
\]
Further set $s_i$ to be the number of negative off-diagonal entries of row $i$ of $W^\top W$ and $p_i=n-1-s_i$ is the number of positive off-diagonal entries of row $i$.
Then there is at most one row of $W^\top W$ with 
\[
\min\{ s_i, p_i \} \leq C.
\]
\end{prop}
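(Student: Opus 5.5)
The plan is to encode the sign pattern of the Gram matrix in a signature matrix and use the fact that an ETF Gram matrix has only two eigenvalues. This gives a quadratic identity on the signature matrix, and the identity will force any two rows to be nearly orthogonal. Two rows that are both heavily dominated by one sign cannot be nearly orthogonal.

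\textbf{Step 1: the signature matrix and its quadratic relation.} Since $W$ is a $(d,n)$-ETF with $n>d+1$, the coherence $\mu$ equals the Welch--Rankin value $\sqrt{(n-d)/(d(n-1))}>0$. We write $W^\top W = I + \mu S$, where $S$ is symmetric with zero diagonal and off-diagonal entries in $\{\pm1\}$. Tightness gives $(W^\top W)^2=\frac{n}{d}W^\top W$, so the eigenvalues of $W^\top W$ are $n/d$ and $0$. Hence $S$ has exactly the two eigenvalues $\lambda_1=\frac{n-d}{d\mu}$ and $\lambda_2=-\frac1\mu$. Since $S$ is symmetric, it satisfies $S^2-(\lambda_1+\lambda_2)S+\lambda_1\lambda_2 I=0$. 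Here $\lambda_1+\lambda_2=\frac{n/d-2}{\mu}=:\gamma$, and, using the Welch--Rankin value of $\mu^2$, $\lambda_1\lambda_2=-\frac{n-d}{d\mu^2}=-(n-1)$. So
\[
S^2=(n-1)I+\gamma S .
\]
Reading off an off-diagonal entry $(i,j)$ gives
\[
\sum_{k\notin\{i,j\}} S_{ik}S_{jk}=\gamma S_{ij},
\]
and therefore
\[
\Big|\sum_{k\notin\{i,j\}} S_{ik}S_{jk}\Big|\le|\gamma|=\frac{|n/d-2|}{\mu}.
\]

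\textbf{Step 2: two unbalanced rows contradict this.} Suppose rows $i\neq j$ both satisfy $\min\{s,p\}\le C$. Let $\epsilon_i\in\{\pm1\}$ be the majority sign of row $i$, so that row $i$ differs from $\epsilon_i$ in at most $C$ off-diagonal positions; define $\epsilon_j$ similarly. Among the $n-2$ indices $k\notin\{i,j\}$, the product $S_{ik}S_{jk}$ equals $\epsilon_i\epsilon_j$ except at most at $2C$ indices, namely the exceptional positions of either row. Excluding $k=i,j$ can only remove exceptions, so this count still holds. Hence
\[
\Big|\sum_{k\notin\{i,j\}}S_{ik}S_{jk}\Big|\ge (n-2)-2\cdot 2C .
\]
By the definition of $C$ we have $4C=n-4-|\gamma|$, so the right-hand side equals $2+|\gamma|>|\gamma|$. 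This contradicts Step 1, so at most one row can have $\min\{s_i,p_i\}\le C$.

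\textbf{Main obstacle.} The key step is deriving the quadratic identity in Step 1 with the exact constants. In particular, the product $\lambda_1\lambda_2$ collapses to $-(n-1)$ only because $\mu$ attains the Welch--Rankin bound. After that, the argument is a counting argument. One also needs the mild care noted in Step 2, that the diagonal positions $k=i,j$ are excluded from the sum but this only helps the count. If $C<0$ the statement is vacuous, so no case split is needed.
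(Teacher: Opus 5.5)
Your proof is correct and is essentially the paper's argument. Your identity $\sum_{k\notin\{i,j\}}S_{ik}S_{jk}=\gamma S_{ij}$ is exactly the paper's strongly-regular-graph fact, written without switching: after switching row $i$ to all-positive, every other row has $k_{\SRG}=\frac{1}{2}(n-2-\gamma)$ negative off-diagonal entries. The contradiction then comes from the same count of at most $2C$ exceptional indices. What differs is presentation: you derive the key fact directly from $(W^\top W)^2=\frac{n}{d}W^\top W$ and the Welch--Rankin value of $\mu$ instead of citing the ETF--SRG correspondence, and taking absolute values removes the paper's split into cases according to whether $S_{ij}\epsilon_i\epsilon_j$ equals $+1$ or $-1$.
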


\begin{proof}
Let us first recall an important consequence of the well-known correspondence between equiangular tight frames and strongly regular graphs~\cite{corneil1991geometry,waldron2009construction}.
Specifically, for every $i\in[n]$, there exists a \textit{switching} $D$, that is, an $n\times n$ diagonal matrix with diagonal entries in $\{\pm1\}$, such that the
$i$th row of $(WD)^\top(WD)$ has all positive off-diagonal entries, in which case every other row has exactly
\[
k_{\SRG}
:=\frac{n}{2}-1-\frac{n/d-2}{2\mu}
\]
negative off-diagonal entries.  
We will use this property of ETFs to show that two rows cannot both have a large imbalance of positive and negative off-diagonal entries.
Indeed, as we make rigorous below, if two rows had too much imbalance, then switching to normalize one of the rows would result in the wrong number of negative off-diagonal entries in the other.

Suppose to the contrary that there exist $i,j\in[n]$ with $i\neq j$ such that
\[
\min\{ s_i, p_i \} \leq C 
\qquad
\text{and}
\qquad
\min\{ s_j, p_j \} \leq C.
\]
Denote the majority sign of row $r\in\{i,j\}$ by $\tau_r:=\operatorname{sign}(p_r-s_r)$, and let $E_r$ denote the set of off-diagonal indices in row $r$ with the minority sign $-\tau_r$.
Then
\[
|E_r|
=\min\{p_r,s_r\} \qquad \Rightarrow \qquad |E_i|+|E_j|\leq 2C.
\]

Now switch so that the $i$th row has all positive off-diagonal entries.
For every $\ell\in[n]\setminus\{i,j\}$, the sign of the $(j,\ell)$-entry after switching is the so-called \textit{triple product}
\[
\operatorname{sign}\langle w_i,w_j\rangle\,
\operatorname{sign}\langle w_j,w_\ell\rangle\,
\operatorname{sign}\langle w_\ell,w_i\rangle,
\]
which in turn equals $\operatorname{sign}\langle w_i,w_j\rangle \cdot \tau_i\tau_j$ whenever $\ell\notin \{i,j\}\cup E_i\cup E_j$.
We use this to derive a contradiction in two cases.

\medskip

\noindent
\textbf{Case I:} $\operatorname{sign}\langle w_i,w_j\rangle \cdot \tau_i\tau_j=+1$.
Then every negative entry in the switched version of the $j$th row is indexed by a member of $E_i\cup E_j$, and so
\[
k_{\SRG}
\leq |E_i|+|E_j|
\leq 2C
= \frac{n}{2}-2-\bigg|\frac{n/d-2}{2\mu}\bigg|
< \frac{n}{2}-1-\frac{n/d-2}{2\mu}
= k_{\SRG},
\]
a contradiction.

\medskip

\noindent
\textbf{Case II:} $\operatorname{sign}\langle w_i,w_j\rangle \cdot \tau_i\tau_j=-1$.
Then every member of $[n]\setminus(\{i,j\}\cup E_i\cup E_j)$ indexes a negative entry in the switched version of the $j$th row, and so
\[
k_{\SRG}
\geq n-2-|E_i|-|E_j|
\geq n-2 - 2C
= \frac{n}{2}+\bigg|\frac{n/d-2}{2\mu}\bigg|
> \frac{n}{2}-1-\frac{n/d-2}{2\mu}
=k_{\SRG},
\]
a contradiction.
\end{proof}
\begin{rem}\label{rem:numpm}
Consider a $(d,n)$-ETF $W$ with coherence $\mu$ and set 
\[
B := \frac{n}{2}+ 1 +\abs{\frac{n/d -2}{2 \mu}}.
\]
Set $s_i$ to be the number of negative off-diagonal entries of row $i$ of $W^\top W$ and $p_i=n-1-s_i$ the number of positive off-diagonal entries of row $i$. For all but at most one $i \in [n]$,
\[
\abs{p_i - s_i} < B.
\]
This follows from Proposition~\ref{prop:ETFevensp} and
\[
\min\{ s_i, p_i \}
=\frac{(n-1)-|p_i-s_i|}{2}.
\]
We can interpret this as meaning that for any real ETF, for all but at most one $i \in [n]$, the cardinalities of $w_j$ which form acute versus obtuse angles with $w_i$ are approximately equal.
\end{rem}
There is one special case where we will need a stronger bound.
\begin{cor}\label{cor:numpm2}
Let $s_i$ and $p_i$ be as above. If $W$ is a $(3,6)$-ETF, then for at least one $i \in[n]$
\[
\min\{ s_i, p_i \} \geq 2.
\]
\end{cor}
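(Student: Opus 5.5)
The plan is to rerun the switching argument from the proof of Proposition~\ref{prop:ETFevensp} with the exact parameters of a $(3,6)$-ETF, and then rule out the one configuration it leaves open. For $d=3$, $n=6$ we have $n/d-2=0$, so the formula for $k_{\SRG}$ gives $k_{\SRG}=\frac{6}{2}-1=2$. Thus after switching any row $i$ to have all positive off-diagonal entries, every other row has exactly $2$ negative and $3$ positive off-diagonal entries. Moreover $C=\frac12(3-2)=\frac12$. Hence Proposition~\ref{prop:ETFevensp} already says that at most one row has $\min\{s_i,p_i\}=0$. It remains to exclude the case in which every row has $\min\{s_i,p_i\}\le 1$.

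Suppose, for contradiction, that every row has $\min\{s_i,p_i\}\le1$. I would take two distinct rows $i,j$ with $E_i,E_j,\tau_i,\tau_j$ as in the proof of Proposition~\ref{prop:ETFevensp}, and switch row $i$ to be all positive. In Case I the switched row $j$ has its negatives only at indices in $E_i\cup E_j$. Since it has exactly $k_{\SRG}=2$ negatives, we need $|E_i|=|E_j|=1$, with $E_i,E_j$ disjoint, disjoint from $\{i,j\}$, and both minority positions producing a negative triple product. In Case II the switched row $j$ is negative on all of the remaining $4-|(E_i\cup E_j)\setminus\{i,j\}|$ indices. This forces $|(E_i\cup E_j)\setminus\{i,j\}|\ge 2$, hence again $|E_i|=|E_j|=1$, with these two indices distinct and outside $\{i,j\}$.

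In either case $\min\{s_r,p_r\}=1$ for \emph{every} pair, so every row has exactly one minority entry. In particular no row has min $0$: a row with min $0$ paired with any other row would give $|E_i|+|E_j|\le1$, which is impossible. Moreover, for every pair $i\neq j$ the minority positions $e_i\in E_i$ and $e_j\in E_j$ avoid $\{i,j\}$; that is, $e_i\ne j$ for all $j\ne i$. But $e_i$ is an off-diagonal index of row $i$, so $e_i=j$ for some $j\neq i$. Taking that $j$ as the partner of $i$ gives a contradiction.

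I expect the main obstacle to be getting the bookkeeping in the two cases exactly right, in particular whether indices in $E_i\cup E_j$ can equal $i$ or $j$ and how that affects the count among the four remaining indices $\ell\notin\{i,j\}$. The last step above hinges on showing that the minority index of row $i$ can never be the partner row. If that bookkeeping turns out subtler than expected, my fallback is a finite check. The real $(3,6)$-ETF (six diagonals of the icosahedron, $\mu=1/\sqrt5$) is unique up to permutation and switching, and the quantity $\min\{s_i,p_i\}$ depends only on signs. It therefore suffices to inspect the $2^5=32$ sign matrices $DGD$ obtained from one fixed Gram matrix $G$, with $D$ taken modulo $\pm I$, and verify that each has a row with two or three negative off-diagonal entries. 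Symmetry of the icosahedral configuration cuts this down to a handful of cases.
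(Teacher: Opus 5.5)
Your proof is correct and is essentially the paper's argument: both switch so that row $i$ is all positive and count negatives in the partner row against $k_{\SRG}=2$. Your decisive choice of partner $j=e_i$ is exactly the paper's move of switching the lone minority index, which leaves $1$ or $4$ negatives in row $j$; the general two-case bookkeeping over all pairs is more than you need, and the finite-check fallback is unnecessary.
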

\begin{proof}
    Consider a $(3,6)$-ETF.  If any row has $0$ negatives, then every remaining row has exactly $k_{\SRG}=2$ negatives and $3$ off-diagonal positives. Also, if any row has $0$ off-diagonal positives, then every remaining row has exactly $k_{\SRG}+1=3$ negatives and $2$ off-diagonal positives. So, assume by way of contradiction that each row has either exactly $1$ negative or exactly $1$ off-diagonal positive. Say $\ip{w_i}{w_j} < 0$ where row $i$ has exactly one negative and switch index $j$.  Then the new row $i$ has all positive entries, but new row $j$ has either $1$ or $4$ negative entries, a contradiction. A similar argument works for switching the sole positive off-diagonal entry of a row.  Thus, there is at least one row with at least two positives and two negatives.
\end{proof}

Although we ignored the case $n=d+1$ in the above analysis, i.e., when $W$ is a simplex ETF, in that case Proposition~\ref{prop:ETFevensp} holds vacuously since
\[
\frac{n}{2}+ 1 +\abs{\frac{n/d -2}{2 \mu}}= \frac{d+1}{2} +1 +\frac{2-(d+1)/d}{2(1/d)} = d+1
= n.
\]
Furthermore, for a simplex ETF, there exists a switching such that \textit{all} of the off-diagonal entries of $W^\top W$ are negative.

\begin{thm}\label{thm:lowersat}
Let $W$ be a $(d,n)$-ETF with $n> d+1$.   Then $\big\lfloor \frac{k'(W)}{2} \big\rfloor = k(W)$.
\end{thm}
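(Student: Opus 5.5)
By Proposition~\ref{prop:kk'} we already have $\lfloor k'(W)/2\rfloor\leq k(W)$, so it suffices to prove $k(W)\leq\lfloor k'(W)/2\rfloor$. Set $k:=\lfloor k'(W)/2\rfloor+1$ and show that the criterion of Lemma~\ref{lem.char} fails for this $k$.

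Since $W$ is an ETF, every off-diagonal entry of $W^\top W$ equals $\pm\mu$ and $\|w_i\|=1$. Hence, if some row $i$ has $p_i\geq k$ positive and $s_i\geq k-1$ negative off-diagonal entries, the left-hand side of Lemma~\ref{lem.char} equals $k\mu+(k-1)\mu=(2k-1)\mu$. By Lemma~\ref{lem:k'ETF}, $k'(W)=\lceil 1/\mu\rceil$. Then
\[
2k-1=2\lfloor k'(W)/2\rfloor+1\geq k'(W)\geq 1/\mu,
\]
so $(2k-1)\mu\geq1=\|w_i\|^2$, and the criterion fails. It therefore suffices to exhibit one row with $\min\{s_i,p_i\}\geq k$, which covers both requirements.

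To find such a row we use Proposition~\ref{prop:ETFevensp}. Since $n\geq 3$, at least two rows exist, and all but at most one of them satisfy $\min\{s_i,p_i\}>C$. So the goal is the numerical inequality $C\geq k-1$, i.e.
\[
\frac{n}{4}-1-\frac{|n/d-2|}{4\mu}\;\geq\;\Big\lfloor\tfrac12\big\lceil\tfrac1\mu\big\rceil\Big\rfloor .
\]
It is enough to prove $\frac{n}{4}-1-\frac{|n/d-2|}{4\mu}\geq \frac{1}{2\mu}+\frac12$, using $\mu=\sqrt{(n-d)/(d(n-1))}$. We split into the cases $n\geq 2d$ and $n<2d$. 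Passing to a Naimark complement flips every sign, which swaps $s_i$ and $p_i$ and replaces $d$ by $n-d$, so it leaves $\min\{s_i,p_i\}$ unchanged. This lets us reduce to $n\leq 2d$, where the absolute-value term becomes $(2-n/d)/(4\mu)$.

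Because $1/\mu\approx\sqrt d$ while $n\gtrsim d$, the inequality should hold with room to spare except for small $n$. For those small parameters we use the known existence constraints on real ETFs with $n>d+1$: $n\leq d(d+1)/2$, $n\leq (n-d)(n-d+1)/2$, and the list of small real ETFs. These reduce the exceptions to a finite list, which we check directly. For the $(3,6)$-ETF we have $\mu=1/\sqrt5$, so $k'=3$, $k=2$, and Corollary~\ref{cor:numpm2} supplies a row with $\min\{s_i,p_i\}\geq2=k$. Other small cases such as $(5,10)$, $(6,16)$ and $(7,28)$ are handled by plugging in numbers or by the same switching argument used in Corollary~\ref{cor:numpm2}.

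The main obstacle is this elementary but fiddly inequality: the Welch-bound algebra in the regime $n$ close to $2d$ versus $n$ close to $d+1$, and pinning down the finite set of small exceptional parameter pairs. The first thing to try is to square both sides after substituting $1/\mu=\sqrt{d(n-1)/(n-d)}$ and use $n\leq 2d$ to bound the resulting polynomial.
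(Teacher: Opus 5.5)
Your reduction is the same as the paper's and is correct. By Lemma~\ref{lem.char} and Lemma~\ref{lem:k'ETF}, it suffices to find a row with $\min\{s_i,p_i\}\ge\lfloor k'(W)/2\rfloor+1$. By Proposition~\ref{prop:ETFevensp}, such a row exists once $C\ge\lfloor k'(W)/2\rfloor$.

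The gap is that this inequality, which carries the theorem, is never proved, and the route you sketch for it fails. Write $a=1/\mu$ and let $b$ be the reciprocal coherence of the Naimark complement. Then $ab=n-1$, $|n/d-2|/\mu=|b-a|$, and $C=\frac14(ab-3-|a-b|)$. Replacing $\lfloor\frac12\lceil 1/\mu\rceil\rfloor$ by $\frac1{2\mu}+\frac12$ discards a full unit whenever $1/\mu$ is an odd integer. For $n\le 2d$ your target becomes $(a+1)(b-3)\ge 2$, which is false whenever $b\le 3$. This happens for actual ETFs: $(10,16)$ has $C=\frac52<3$, $(21,28)$ has $C=\frac92<5$, and $(5,10)$ has $C=\frac32<2$.

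So there is no ``room to spare.'' The heuristic $1/\mu\approx\sqrt d$ fails when $n-d$ is small, where $1/\mu\approx d/\sqrt{n-d}$. Squaring and bounding a polynomial using $n\le 2d$ cannot succeed either. For fixed $n-d\le 8$ and large $d$ one has $b<3$, and then even the exact inequality $C\ge\lfloor k'/2\rfloor$ fails. Only existence theory rules these parameters out. Even after applying Gerzon's bound to $W$ and to its complement, pairs such as $(5,9)$ and $(6,10)$ survive with $C<2=\lfloor k'/2\rfloor$, so a nonexistence theorem is unavoidable.

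The missing idea is the Lemmens--Seidel integrality result that the paper uses. For a real ETF with $n>d+1$ and $n\neq 2d$, both $a$ and $b$ are odd integers $\ge 3$. Then $\lfloor k'(W)/2\rfloor=\frac{a-1}{2}$ exactly, and $C\ge\frac{a-1}{2}$ becomes $ab-1\ge 2a+|b-a|$. This follows from $(a-1)(b-1)-2\ge 2$ when $b>a$, and from $a(b-3)+b-1\ge 2$ when $b<a$. When $n=2d$ one has $C=\frac{d-2}{2}$, to compare with $\lfloor\frac12\lceil\sqrt{2d-1}\rceil\rfloor$. This holds for $d\ge 4$, and $(3,6)$ is handled by Corollary~\ref{cor:numpm2}, as you say.

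Two smaller points. First, invariance of $\min\{s_i,p_i\}$ under Naimark complementation does not by itself justify assuming $n\le 2d$, because $k'$ changes from $\lceil a\rceil$ to $\lceil b\rceil$. The reduction is valid only because $C$ is symmetric in $a,b$ and $n\le 2d$ is the orientation with the larger $1/\mu$. Second, your listed exceptions $(6,16)$ and $(7,28)$ are in the wrong orientation and already satisfy your lossy bound. The problematic cases are their complements $(10,16)$ and $(21,28)$, whose thresholds are larger, so checking the former does not cover the latter.
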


\begin{proof}
The idea of the proof is to show that there is at least one row \(i\) of the Gram matrix \(W^{\top}W\) such that for any \(k\geq \lfloor k'(W)/2\rfloor+1\) the expression
\[
\Sigma^{(k)}(\ip{w_i}{w_j}_{+})_{j \in [n]\setminus\{i\}} + \Sigma^{(k-1)}(\ip{w_i}{w_j}_{-})_{j \in [n]\setminus\{i\}}
\]
has at least $k'(W)$ non-zero summands, i.e., sums to at least $\norm{w_i}^2$. Then Lemma~\ref{lem.char} yields $\big\lfloor \frac{k'(W)}{2} \big\rfloor \geq k(W)$ and equality will follow from Proposition~\ref{prop:kk'}. To that end, it suffices to show that for some row $i$, 
\[
s_i, p_i \geq \lfloor k'(W)/2 \rfloor +1.
\]
Thus, it follows from Proposition~\ref{prop:ETFevensp} that it suffices to show that for $C = \frac{1}{2} \left( \frac{n}{2} - 2 - \frac{\abs{n/d-2}}{2\mu}\right)$,
\[
 \lfloor C \rfloor +1 \geq \left\lfloor \frac{k'(W)}{2}\right\rfloor +1.
\]
We proceed in cases.

\medskip
\noindent
\textbf{Case I:} $n \neq 2d$, $n > d+1$.
Since $n \notin \{d+1, 2d\}$, $1/\mu$ is an odd integer~\cite{lemmmens1973equi} and by Lemma~\ref{lem:k'ETF},
\[
\lfloor k'(W)/2 \rfloor = \frac{1}{2\mu} -\frac{1}{2}.
\]
We begin by working backwards and using the Welch bound and the fact that $n >2$ and $n/d > 1$
\begin{align}
  \lefteqn{ \lfloor C\rfloor +1 \geq \left\lfloor \frac{k'(W)}{2}\right\rfloor +1}\label{eqn:nnot2d}\\
  &\Leftrightarrow \quad \left\lfloor\frac{n/2-2-\vert n/d-2\vert/2 \mu}{2}\right\rfloor \geq \frac{1}{2\mu} - \frac{1}{2}\nonumber\\
  &\Leftrightarrow \quad \frac{n/2-2-\vert n/d-2\vert/2 \mu}{2} \geq \frac{1}{2\mu} - \frac{1}{2}\label{eqn:nnot2da}\\
  &\Leftrightarrow \quad n/2-2-\frac{\vert n/d-2\vert}{2\mu}\geq \frac{1}{\mu} - 1 \nonumber\\
  &\Leftrightarrow \quad n-2\geq \frac{1}{\mu}\left(2+\vert n/d-2\vert\right). \label{eqn:nnot2db}
  \end{align}
The equivalence in~\eqref{eqn:nnot2da} holds since $1/(2\mu)-1/2$ is an integer. \\

Let $a$ and $b$ denote the reciprocals of the coherences of $W$ and its Naimark complement, respectively.
Putting $d':=n-d$, then
\[
a = \frac{1}{\mu}
=\sqrt{\frac{d(n-1)}{d'}},
\qquad
b =
\sqrt{\frac{d'(n-1)}{d}},
\]
and so
\[
ab=n-1,
\qquad
\frac{d'}{d}=\frac{b}{a},
\qquad
\frac{n}{d}=\frac{d+d'}{d}=1+\frac{b}{a}.
\]
Then in terms of $a$ and $b$,~\eqref{eqn:nnot2db} in the computation above
\[
n-2
\geq \frac{1}{\mu}\bigg(2+\Big|\frac{n}{d}-2\Big|\bigg)
\]
becomes
\[
ab-1
\geq2a+|b-a|.
\]
To establish this inequality, we will use the fact~\cite{lemmmens1973equi} that $a$ and $b$ are odd integers; in fact, since $n>d+1>2$, we have $a,b\geq 3$.
In the case where $d'>d$, we have $b>a$, and so the desired inequality becomes
\[
ab-1
\geq a+b,
\]
which follows from the fact that for $a,b\geq3$,
\[
(ab-1)-(a+b)
=(a-1)(b-1)-2
\geq2.
\]
In the other case where $d'<d$, we have $b<a$, and so the desired inequality becomes
\[
ab-1
\geq 3a-b,
\]
which similarly follows from the fact that for $a,b\geq3$,
\[
(ab-1)-(3a-b)
=a(b-3)+b-1
\geq2.
\]
Thus, the claim holds for all ETFs with $n \neq 2d$.

\medskip
\noindent
\textbf{Case II:}  $n = 2d$.
We proceed similarly to above, working backwards and computing using Lemma~\ref{lem:k'ETF}:
\begin{align}
  \lefteqn{ \lfloor C\rfloor +1 \geq \left\lfloor \frac{k'(W)}{2}\right\rfloor +1}\nonumber\\
  &\Leftrightarrow \quad \left\lfloor\frac{d-2-0}{2}\right\rfloor +1 \geq \left\lfloor \frac{\lceil \sqrt{2d-1}\rceil}{2}\right\rfloor +1\nonumber\\
  &\Leftrightarrow \quad  \left\lfloor\frac{d-2}{2}\right\rfloor \geq \left\lfloor \frac{\lceil \sqrt{2d-1}\rceil}{2}\right\rfloor \label{eqn:n2dcasez}\\
  &\Leftarrow \quad d-2 \geq \lceil\sqrt{2d-1}\rceil \label{eqn:n2dcasea}\\
  &\Leftrightarrow \quad d-2 \geq \sqrt{2d-1} \label{eqn:n2dcaseb}\\
  &\Leftrightarrow \quad d^2-6d+5 \geq 0 \nonumber\\
  &\Leftarrow \quad d \geq 5 \label{eqn:n2dcasec}.
\end{align}
Note that~\eqref{eqn:n2dcaseb} holds because $d-2$ is an integer and that~\eqref{eqn:n2dcasea} and~\eqref{eqn:n2dcasec} are left arrows.
It follows from Gerzon's bound~\cite{lemmmens1973equi} that there is no $(2,4)$-ETF, meaning we only have to test $(3,6)$ and $(4,8)$ individually. Plugging $d = 4$ into~\eqref{eqn:n2dcasez}, we obtain saturation, i.e.,
\[
\left\lfloor \frac{2}{2}\right\rfloor = \left\lfloor \frac{\left\lceil\sqrt{7}\right\rceil}{2}\right\rfloor=1.
\]
For  $(3,6)$, we use the stronger bound from Corollary~\ref{cor:numpm2}.  That is, for $(3,6)$, there is at least one row with
\[
\min\{s_i,p_i\} \geq 2= \left\lfloor \frac{1}{2} \left\lceil \sqrt{2(3)-1}\right\rceil \right\rfloor +1 = \left\lfloor \frac{k'(W)}{2}\right\rfloor +1.
\]
Thus, the claim holds for all ETFs with $n = 2d$. 
\end{proof}

We are ready to prove the final theorem.
\begin{proof}[Proof of Theorem~\ref{thm:mainETF}]
    This follows immediately from Lemma~\ref{lem:k'ETF}, the Welch--Rankin bound, and Theorem~\ref{thm:lowersat}. 
\end{proof}

\section*{Acknowledgments}
MI was supported in part by MATS Research and NSF DMS 2110745.
JJ was supported in part by NSF DMS 2220320.
EJK was supported in part by the U.S. Air Force Office of Scientific Research (AFOSR) and the Air Force Research Laboratory (AFRL) through the Summer Faculty Fellowship Program (SFFP).
DGM was supported in part by NSF DMS 2220304. 
The views expressed are those of the authors and do not reflect the official guidance or position of the United States Government, the Department of Defense, the United States Air Force, or the United States Space Force. On behalf of all authors, the corresponding author states that there is no conflict of interest. Data sharing is not applicable to this article as no datasets were generated or analyzed during the current study.

%\bibliography{superposition}{}
%\bibliographystyle{amsalpha}

\newcommand{\etalchar}[1]{$^{#1}$}
\providecommand{\bysame}{\leavevmode\hbox to3em{\hrulefill}\thinspace}
\providecommand{\MR}{\relax\ifhmode\unskip\space\fi MR }
% \MRhref is called by the amsart/book/proc definition of \MR.
\providecommand{\MRhref}[2]{%
  \href{http://www.ams.org/mathscinet-getitem?mr=#1}{#2}
}
\providecommand{\href}[2]{#2}

\end{document}